\newtheorem{thm}{Theorem}[section]
\newtheorem{lem}[thm]{Lemma}
\newtheorem{remark}[thm]{Remark}
\newtheorem{assum}[thm]{Assumption}
\newtheorem{exam}[thm]{Example}
\author{
    Shuyu Yin \footnotemark[2]
    \and
    Fei Wen \footnotemark[2]
    \and
    Peilin Liu \footnotemark[2]
    \and
    Tao Luo \footnotemark[1] \, \footnotemark[3] \, \footnotemark[4]
}
\title{Probing Implicit Bias in Semi-gradient Q-learning: Visualizing the Effective Loss Landscapes via the Fokker--Planck Equation}
\begin{document}
\maketitle

\renewcommand{\thefootnote}{\fnsymbol{footnote}}
\footnotetext[1]{Corresponding Author: luotao41@sjtu.edu.cn} 
\footnotetext[2]{Department of Electronic Engineering, Shanghai Jiao Tong University}
\footnotetext[3]{School of Mathematical Sciences, Institute of Natural Sciences, MOE-LSC, CMA-Shanghai, Shanghai Jiao Tong University} 
\footnotetext[4]{Shanghai Artificial Intelligence Laboratory}

\begin{abstract}
    Semi-gradient Q-learning is applied in many fields, but due to the absence of an explicit loss function, studying its dynamics and implicit bias in the parameter space is challenging. This paper introduces the Fokker--Planck equation and employs partial data obtained through sampling to construct and visualize the effective loss landscape within a two-dimensional parameter space. This visualization reveals how the global minima in the loss landscape can transform into saddle points in the effective loss landscape, as well as the implicit bias of the semi-gradient method. Additionally, we demonstrate that saddle points, originating from the global minima in loss landscape, still exist in the effective loss landscape under high-dimensional parameter spaces and neural network settings. This paper develop a novel approach for probing implicit bias in semi-gradient Q-learning.
\end{abstract}

\section{Introduction}
\label{sec::intro}


Q-learning, a classic Reinforcement Learning (RL) algorithm, is often paired with function approximation, such as the Deep Q-Network (DQN)~\cite{mnih2015human}. This algorithm finds applications in various domains, including gaming~\cite{mnih2015human,silver2017mastering,vinyals2019grandmaster}, recommendation systems~\cite{deng2021unified}, and combinatorial optimization~\cite{bello2016neural,khalil2017learning}. The primary objective of Q-learning is to minimize an empirical Bellman optimal loss. The semi-gradient method is commonly employed to minimize this loss. The semi-gradient approach deviates from the exact gradient by omitting the term involving the maximum operation, converge fast but potentially leading to divergence~\cite{sutton2018reinforcement, tsitsiklis1996analysis, van2018deep, achiam2019towards}. In contrast, the residual gradient method ~\cite{baird1995residual} represents the precise gradient of the loss, it offers stability but converge slowly. Moreover, when training the model with partial data, such as through mini-batch and replay buffer techniques~\cite{mnih2015human}, these two methods may converge to different policies~\cite{saleh2019deterministic, zhang2019deep}. Additionally, the semi-gradient method is more prevalent in practical applications. Motivated by this success, we want to investigate the implicit bias of semi-gradient Q-learning.


The research on implicit bias~\cite{vardi2023implicit} contains following directions: the relationship between over-parameterization and generalization~\cite{belkin2019reconciling}, properties of the parameters of learned neural networks~\cite{ergen2021convex}, and different preferences of algorithm for critical points~\cite{keskar2017large}. These studies primarily concentrate on supervised learning and rely on the gradient flow of a explicit loss function. However, due to the absence of a corresponding explicit loss function for the semi-gradient, employing the analytical methodologies common in supervised learning is unfeasible. To address the challenge, we employ the Fokker--Planck Equation (FPE)~\cite{wang2008potential, zhou2016construction} to establish the effective loss landscape and subsequently visualize it. In fields like biology and statistical mechanics, the FPE is utilized to depict the effective loss landscape in scenarios involving non-conservative forces, where the force does not align with the negative gradient of any specific loss function (further details in Appendix \ref{sec::fpeInfo}). Given that the semi-gradient can be viewed as a non-conservative force, we leverage the FPE to construct an effective loss landscape for it. Our approach involves initially constructing the effective loss landscape and then showcasing the implicit bias of the semi-gradient method through training dynamics.


Figure \ref{fig::demo_dynamics_semi_true} illustrates the (effective) loss landscape and training dynamics associated with both the residual gradient and semi-gradient method. Due to the utilization of only partial data for constructing the loss landscape, two solutions emerge within it, represented by orange and blue stars. Notably, the training dynamics of the residual gradient method in (a) exhibit a stark contrast to those of the semi-gradient method in (b). Furthermore, the training dynamics for the semi-gradient method diverge, showcasing an exponential increase in loss, as depicted in (c). Upon comparing (a) and (b), it becomes apparent that the blue star transitions from a global minimum to a saddle point. This straightforward example offers valuable insights into the distinct implicit biases exhibited by the semi-gradient method compared to the residual gradient method when trained with partial data.

\begin{figure}[htb]
\centering
\begin{subfigure}[t]{.31\textwidth}
  \centering
  \includegraphics[width=1\linewidth]{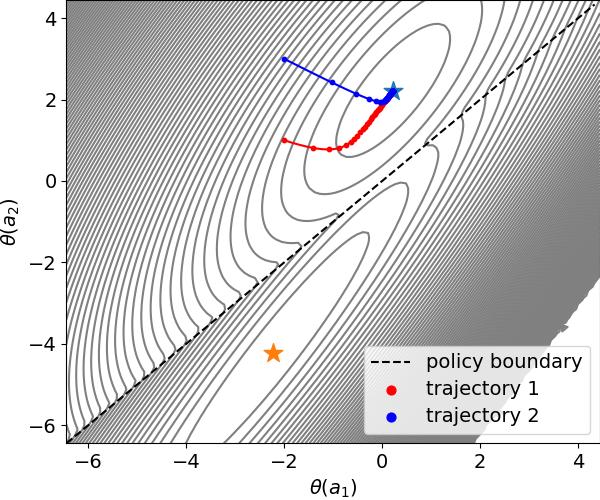}
  \caption{training dynamics of residual gradient method}
  \label{fig:sub-first}
\end{subfigure}
\hspace{10mm}
\begin{subfigure}[t]{.31\textwidth}
  \centering
  \includegraphics[width=1\linewidth]{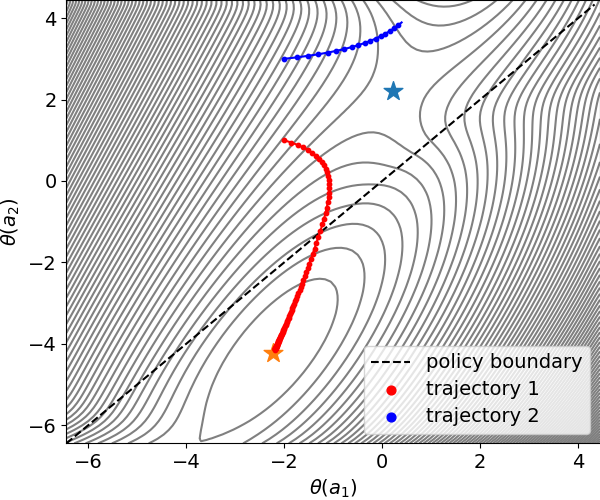}
  \caption{training dynamics of semi-gradient method}
  \label{fig:sub-first}
\end{subfigure}
\caption{
The training dynamics of two methods initiate from the points $(-2, 1)$ (red) and $(-2, 3)$ (blue) within the landscape constructed using the data $\{(s_1, a_1, s_2, r), (s_1, a_2, s_3, r)\}$ from Example \ref{exam::simpleMDP}. In (a), both trajectories converge to $\theta_{\pi_1}$. Conversely, in (b), all the trajectories bias against $\theta_{\pi_2}$.
}
\label{fig::demo_dynamics_semi_true}
\end{figure}


\textbf{Main Contributions:} 1) \textbf{Visualization and Analysis of the Effective Loss Landscape and Implicit Bias in $\mathbb{R}^2$ Parameter Space:} We introduced Wang's potential landscape theory~\cite{wang2008potential} to explore the effective loss landscape of the semi-gradient method. This approach enabled us to visualize the effective loss landscape and help us to analyze the implicit bias. We highlight two key insights: first, the semi-gradient method may transform certain global minima into a saddle point when only partial data is available; second, the gradient of the state-action value function $Q$ can influence the position of the saddle points. 2) \textbf{Extension of Implicit Bias Understanding to Higher Dimensions:} We have shown that the implicit bias observed with the semi-gradient method is also present in high-dimensional parameter spaces and neural networks. This extends our comprehension of implicit bias into more complex and higher-dimensional spaces. 3) \textbf{Development of a Novel Approach for Probing Implicit Bias in Semi-gradient Q-learning:} Our approach comprises three steps: initially, we introduce a visualization tool to detect implicit bias in a simple example; next, we establish an intuitive grasp of the implicit bias; and finally, we design experimental procedures for high-dimensional scenarios to demonstrate the presence of implicit bias. All the code related to this research is available on GitHub at \hyperlink{https://github.com/dayhost/FPE}{https://github.com/dayhost/FPE}.

\section{Related Works}


Our research is closely related with the comparative analysis of the residual and semi-gradient methods. Schoknecht \textit{et al.}~\cite{schoknecht2003td} and Li \textit{et al.}~\cite{li2008worst} have examined the convergence rates of the residual and semi-gradient methods in policy evaluation within a linear approximation framework. Saleh \textit{et al.}~\cite{saleh2019deterministic} have compared the policies learned by residual and semi-gradient Q-learning in both deterministic and stochastic environments. Furthermore, Zhang \textit{et al.}~\cite{zhang2019deep} integrated the residual gradient method into DDPG~\cite{lillicrap2015continuous} and achieved improved performance in the DeepMind Control Suite compared to DDPG utilizing the semi-gradient method. 


The investigation of implicit bias is relevant to this study. An initial exploration of implicit bias was conducted by Neyshabur \textit{el al} \cite{neyshabur2014search}, it demonstrates that the capacity-controlling property of neural networks can lead to improved generalization. Similarly, Belkin \textit{el al} \cite{belkin2019reconciling} discovered the phenomenon of double descent, indicating that neural networks in over-parameterized regime exhibit better generalization as the number of parameters increases. Ergen \textit{el al} \cite{ergen2021convex} analyzes the properties of critical points of two-layer neural networks with regularized loss. Keskar \textit{et al} \cite{keskar2017large} found that the stochastic gradient descent (SGD) method often converges to flatter local minima, enhancing the generalization ability of neural networks. Current research on implicit bias predominantly focuses on supervised learning, while this work aims to discuss implicit bias in the realm of reinforcement learning.


The methodology for modeling non-equilibrium loss landscapes is also relevant to our research. Wang \textit{et al.}~\cite{wang2008potential} introduced Wang's potential landscape theory for constructing effective loss landscapes, a method we employ in our work. Additionally, Zhou \textit{et al.}~\cite{zhou2016construction} outlined three approaches for constructing effective loss landscapes, including Wang's potential landscape theory, the Freidlin-Wentzell quasi-potential method, and the A-type integral method.

\section{Effective Loss Landscape Visualization and Implicit Bias Demonstration}
\label{sec::two_dimension}


Before dive into the loss landscape, we initially define the empirical Bellman optimal loss, residual gradient, and semi-gradient. The empirical Bellman optimal loss is defined as $\mathcal{L} = \frac{1}{|\mathcal{D}|}\sum_{(s,a,s',r) \in \mathcal{D}} \left( Q(s,a) - r(s,a) - \gamma \max_{a' \in A} Q(s',a') \right)^2$, where $s \in S$ represents a state, $S$ denotes the state space, $a \in A$ signifies an action, $A$ denotes the action space, $r: S \times A \to \mathbb{R}$ symbolizes the reward function, $\mathcal{D}$ denotes the sample dataset, $Q: S \times A \to \mathbb{R}$ stands for the state value function, and $\gamma \in (0,1)$ represents the discount factor. The semi-gradient is defined as $\nabla \mathcal{L}_{\rm{semi}} = \frac{2}{|\mathcal{D}|}\sum_{(s,a,s',r) \in \mathcal{D}} \nabla Q(s,a) \left( Q(s,a) - r(s,a) - \gamma \max_{a' \in A} Q(s',a') \right).$
The residual gradient is defined as $\nabla \mathcal{L}_{\rm{res}} = \frac{2}{|\mathcal{D}|}\sum_{(s,a,s',r) \in \mathcal{D}} \left( \nabla Q(s,a) - \nabla \max_{a' \in A} Q(s',a') \right) $ $\left( Q(s,a) - r(s,a) - \gamma \max_{a' \in A} Q(s',a') \right).$

\subsection{Setting and one solution scenario}
\label{sec::two_dimension_setting}
In this subsection, we present an example to illustrate the visualization of loss landscapes. All figures depicting loss landscapes in this section are based on the settings outlined in Example \ref{exam::simpleMDP}.

\begin{exam}[example for visualization]
    \label{exam::simpleMDP}
    Consider a deterministic Markov Decision Process (MDP) $\mathcal{M}(S,A,f,r,\gamma)$ with three states $S = \{s_1, s_2, s_3, s_4\}$ and two actions $A = \{a_1, a_2\}$. State $s_4$ serves as a terminal state, and the transition function $f: \{s_1, s_2, s_3\} \times \{a_1, a_2\} \to \{s_1, s_2, s_3, s_4\}$ is illustrated in Figure \ref{fig:concrete_mdp_example}. The reward function $r: \{s_1, s_2, s_3\} \times \{a_1, a_2\} \to \mathbb{R}$ is defined along each transition path. The discount factor is denoted as $\gamma \in (0, 1)$. Specifically, we fix $\gamma=0.9$ and set $r(s, a)=-0.1$ for all $(s, a) \in \{s_1, s_2, s_3\} \times \{a_1, a_2\}$. For simplicity, we define $r:=r(s,a)$.
\end{exam}

\begin{figure}[htb] 
\centering
\includegraphics[width=0.3\linewidth]{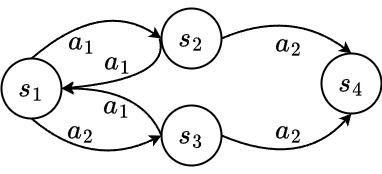}
\caption{The geometric structure of the given MDP environment. }
\label{fig:concrete_mdp_example}
\end{figure}

\textbf{Parameterization of $Q$.} 
Our primary approach involves employing a linear model with two parameters to approximate the $Q$ function. These parameters are denoted as $\theta = [\theta(a_1), \theta(a_2)]^{\text{T}}$, where $\theta(a_1), \theta(a_2) \in \mathbb{R}$, to parameterize $Q$. This function can be expressed as the product of the state embedding and parameters: $Q(s,a) = \phi(s) \theta(a)$, where $\phi(s) \in \mathbb{R}$ represents the state embedding of state $s$. Specifically, we assume $\phi(s_1)=0.1$, $\phi(s_2)=11/180$ and $\phi(s_3)=29/180$. Given that all state embeddings are positive, only two policies can be defined: $\pi_1(s_1) = a_1, \pi_1(s_2) = a_1$ and $\pi_2(s_1)=a_2, \pi_2(s_2)=a_2$. The equality $\theta(a_1)=\theta(a_2)$ delineates the policy boundary.

\begin{remark}
    In many theoretical analyses of Q-learning with linear approximation, the state-value function is typically denoted as $Q(s,a) = \phi(s,a) \theta$, where $\phi(s,a) \in \mathbb{R}^d$ and $\theta \in \mathbb{R}^d$. However, in the practical implementation of DQN, the input data consists of state embeddings that depend solely on the state, while the output is a vector representing the action values for the given state. We adopt the setting in DQN. Additionally, we assume that $\phi(s_1), \phi(s_2), \phi(s_3) > 0$. This assumption stems from the interpretation that the output of the second-to-last layer in a neural network utilizing ReLU activation can be considered as the state embedding, where the elements are non-negative.
\end{remark}


\textbf{Settings for numerical calculation.} 
The "force" in the Fokker--Planck equation is the negative residual or semi-gradient associated with different policies, i.e., \eqref{eq::trueGradientPi1}, \eqref{eq::trueGradientPi2}, \eqref{eq::semiGradientPi1}, and \eqref{eq::semiGradientPi2}. To facilitate the solution of the equation using the numerical method\footnote{Github:\hyperlink{ https://github.com/johnaparker/fplanck}{ https://github.com/johnaparker/fplanck}}~\cite{holubec2019physically}, this "force" is discretized into a force matrix with dimensions of $100 \times 100$. Additionally, the probability distribution is discretized into a matrix with same size. The resolution of this discretization is $0.095$, and the propagation time required to determine the stationary distribution is $100,000$. The loss landscape, computed via the Fokker--Planck equation, is also influenced by a diffusion constant $\sigma$. As $\sigma \to 0$, the effective loss landscape converges towards the true static effective loss landscape. However, a smaller diffusion constant necessitates significantly greater computational resources. Hence, we opt for a sufficiently small diffusion constant, specifically $\sigma=2^{-8}$. The discretization process and the presence of a non-zero diffusion constant introduce a manageable level of numerical error to the visualization. Besides, we use an NVIDIA GeForce RTX 3080 GPU to do numerical calculation.

\textbf{Data sampling strategy.}
In practical applications, training data is sampled from the environment, often containing only a subset of environmental's data. Moreover, different sample data can influence the loss landscape and training dynamics. To demonstrate the (effective) loss landscape of different data, we designed the following sampling strategy: we sample one data point each containing $a_1$ and $a_2$, forming a mini-batch denoted as $\{(s_{\alpha},a_1,s'_{\alpha},r), (s_{\beta},a_2,s'_{\beta},r)\}$, and visualize it. The current sampling strategy yields a total of nine possible mini-batches, with five mini-batches having one solution and four mini-batches having two solutions. The figures of the loss landscape that were not presented in the main content are displayed in Appendix \ref{sec::addition_figs}. We define the solution within the region $\{\theta \in \mathbb{R}^2|\theta(a_1) \geq \theta(a_2)\}$ as $\theta_{\pi_1}$ and the solution within the region $\{\theta \in \mathbb{R}^2|\theta(a_1) \leq \theta(a_2)\}$ as $\theta_{\pi_2}$. The conditions for the existence of these two solutions are outlined in Lemma \ref{lem::existenceOfGloablMin}. 

\textbf{One solution scenario and smoothness of effective loss landscape.} 
When there is only one critical point, it will be a global minimum in loss landscape, while in the effective loss landscape, this critical point could potentially be a global minimum (as shown in Figure \ref{fig::loss_landscape_of_one_solution}), or a saddle point (as shown in Figure \ref{fig::loss_landscape_s2_a1_s1_s1_a2_s3}). However, since the scenario of having only one critical point is rare in high-dimensional cases, we only provide one example to illustrate the smoothness of the effective loss landscape. Figure \ref{fig::loss_landscape_of_one_solution} is generated using the mini-batch $\{(s_1, a_1, s_2, r), (s_2, a_2, s_4, r)\}$. $\theta(a_1)$ is the abscissa, and $\theta(a_2)$ is the ordinate. In the Figure \ref{fig::loss_landscape_of_one_solution}, the term "force" denotes the negative residual or the semi-gradient. The "gradient" refers to the negative gradient of both the loss and the effective loss landscapes. Within the context of the loss landscape, "force" is equivalent to "gradient". The "flux" is defined as the discrepancy between "force" and "gradient". Nonetheless, the "flux" within the loss landscape is non-zero, attributable to numerical error. Detailed definitions are given in Appendix \ref{sec::fpeInfo}. A comparison between (a) and (b) reveals that the contour in (a) exhibits a "heart shape," indicating the influence of the policy boundary on the loss landscape. In contrast, the contour in (b) is unaffected by the policy boundary. The shape of the contour suggests a non-smooth loss landscape, while the effective loss landscape is smooth, a notion supported by Lemma \ref{lem::continuitySemi}.

\begin{figure}[htb]
\centering
\begin{subfigure}[t]{.31\textwidth}
  \centering
  \includegraphics[width=1\linewidth]{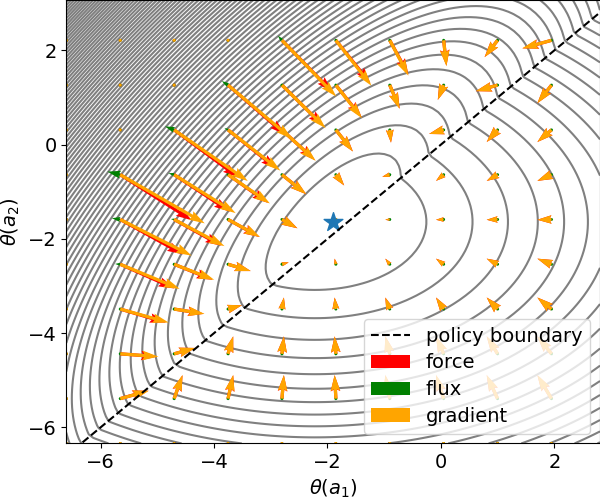}
  \caption{loss landscape}
  \label{fig:sub-first}
\end{subfigure}
\hspace{10mm}
\begin{subfigure}[t]{.31\textwidth}
  \centering
  \includegraphics[width=1\linewidth]{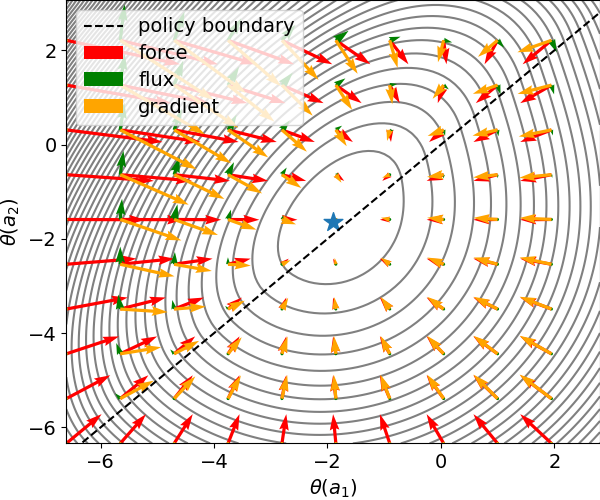}
  \caption{effective loss landscape}
  \label{fig:sub-first}
\end{subfigure}
\centering
\caption{
Loss landscape with the mini-batch $\{(s_1, a_1, s_2, r)$, $(s_2,a_2,s_4,r)\}$ and only $\theta_{\pi_2}$ exists. In (a), the region surrounding the global minimum exhibits a "heart" shape and lacks smoothness. Conversely, the effective loss landscape in (b) is smooth. Despite the difference of smoothness between (a) and (b), their landscape shapes are similar.
}
\label{fig::loss_landscape_of_one_solution}
\end{figure}


\subsection{Effective loss landscapes with two solutions}
In this section, we further the discussion utilizing the parameters defined in Example \ref{assum:mdpAssum} and illustrate the notable distinction between the loss landscapes associated with two solutions.

\textbf{Transition of global minima to saddle point.} For the mini-batch $\{(s_1, a_1, s_2, r)$, $(s_1,a_2,s_3,r)\}$, since $\phi(s_1) - \gamma \phi(s_2) > 0$ and $\phi(s_1) - \gamma \phi(s_3) < 0$, according to Lemma \ref{lem::existenceOfGloablMin}, the Bellman optimal loss has two solutions. Figure \ref{fig::loss_landscape_of_two_solution1} is constructed with the mini-batch. In Figure \ref{fig::loss_landscape_of_two_solution1} (a), the two solutions $\theta_{\pi_1}$ and $\theta_{\pi_2}$ (orange and blue stars) are two global minima, and the policy boundary separates these two global minima. However, in Figure \ref{fig::loss_landscape_of_two_solution1} (b), the contours reveal that while $\theta_{\pi_1}$ remains a global minimum, $\theta_{\pi_2}$ transitions into a saddle point. The position of the saddle point is consistent with statement $(1)$ of Theorem \ref{thm::SaddPointRange}.

\begin{figure}[htb]
\centering
\begin{subfigure}[t]{.31\textwidth}
  \centering
  \includegraphics[width=1\linewidth]{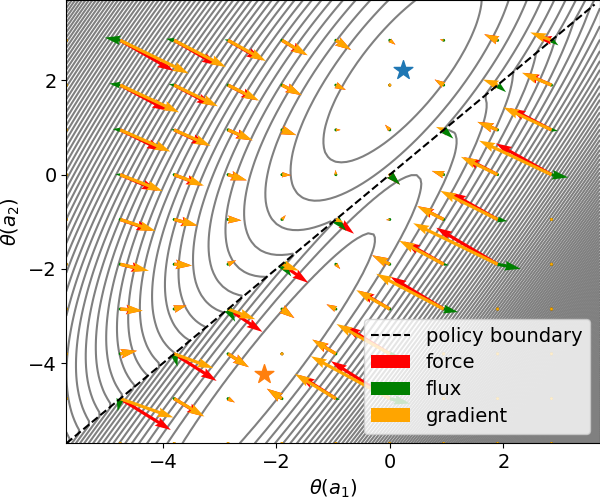}
  \caption{loss landscape}
  \label{fig:sub-first}
\end{subfigure}
\hspace{10mm}
\begin{subfigure}[t]{.31\textwidth}
  \centering
  \includegraphics[width=1\linewidth]{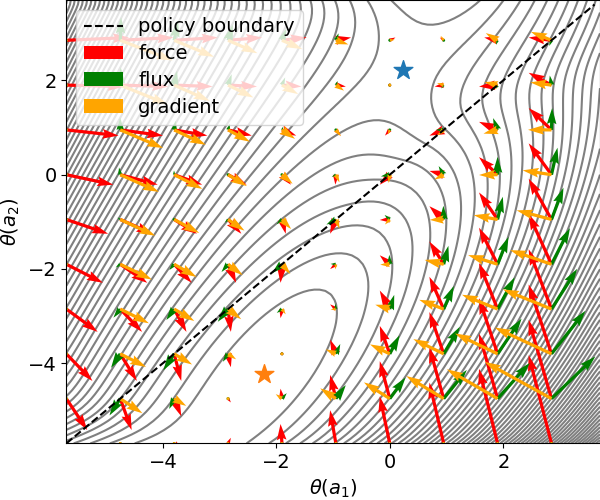}
  \caption{effective loss landscape}
  \label{fig:sub-first}
\end{subfigure}
\caption{
Loss landscapes with the mini-batch $\{(s_1, a_1, s_2, r)$, $(s_1,a_2,s_3,r)\}$, both $\theta_{\pi_1}$ (orange star) and $\theta_{\pi_2}$ (blue star) exist. In (a), two exact solutions are considered as two global minima. However in (b), $\theta_{\pi_1}$ is considered as a global minima but $\theta_{\pi_2}$ is considered as a saddle point.
}
\label{fig::loss_landscape_of_two_solution1}
\end{figure}

\textbf{Displacement of the saddle point.} For the mini-batch $\{(s_2, a_1, s_1, r)$, $(s_2,a_2,s_4,r)\}$, given that $\phi(s_2) - \gamma \phi(s_1) < 0$ and $\phi(s_2) - \gamma \phi(s_4) > 0$, by Lemma \ref{lem::existenceOfGloablMin} there are two solutions. Figure \ref{fig::loss_landscape_of_two_solution2} is constructed with the mini-batch. In Figure \ref{fig::loss_landscape_of_two_solution2} (a), two global minima still exist, but in Figure \ref{fig::loss_landscape_of_two_solution2} (b), the contours reveal that $\theta_{\pi_1}$ (represented by the orange star) became a saddle point. A comparison between Figure \ref{fig::loss_landscape_of_two_solution1} (b) and Figure \ref{fig::loss_landscape_of_two_solution2} (b) shows that the saddle point has displaced from $\theta_{\pi_2}$ to $\theta_{\pi_1}$.  The position of the saddle point is consistent with statement $(2)$ of Theorem \ref{thm::SaddPointRange}.

\begin{figure}[htb]
\centering
\begin{subfigure}[t]{.31\textwidth}
  \centering
  \includegraphics[width=1\linewidth]{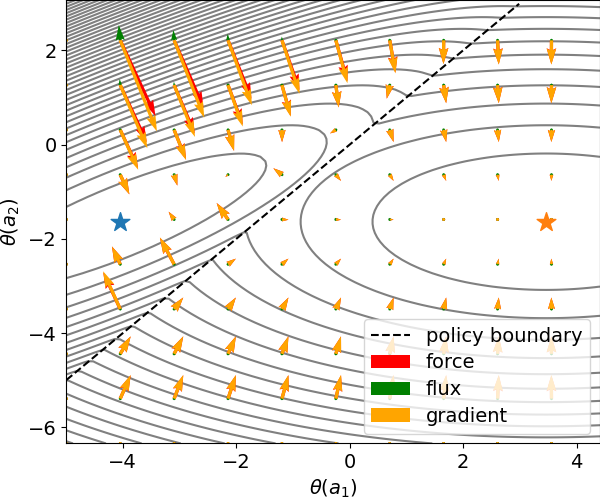}
  \caption{loss landscape}
  \label{fig:sub-first}
\end{subfigure}
\hspace{10mm}
\begin{subfigure}[t]{.31\textwidth}
  \centering
  \includegraphics[width=1\linewidth]{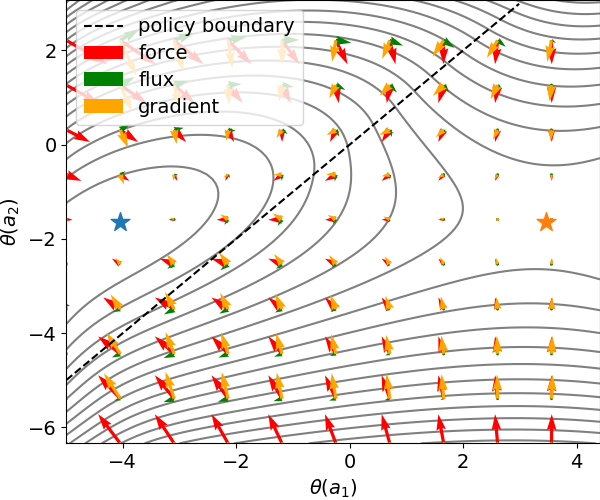}
  \caption{effective loss landscape}
  \label{fig:sub-first}
\end{subfigure}
\caption{Loss landscapes with the mini-batch $\{(s_2, a_1, s_1, r)$, $(s_2,a_2,s_4,r)\}$, both $\theta_{\pi_1}$ (orange star) and $\theta_{\pi_2}$ (blue star) exist. Compare with Figure \ref{fig::loss_landscape_of_two_solution1}, the saddle point shifted from $\theta_{\pi_2}$ to $\theta_{\pi_1}$.}
\label{fig::loss_landscape_of_two_solution2}
\end{figure}

\textbf{Intuitive understanding of the existence of saddle point.} 
Here, our focus lies on the presence of saddle points when multiple solutions exist in the effective loss landscape, a common occurrence in high-dimensional parameter spaces. Let's first consider the two-dimensional scenario. Intuitively speaking, due to the presence of solutions on both sides of the policy boundary and the absence of other critical points on the effective loss landscape, the smoothness of the effective loss landscape results in a "connectivity" between the two solutions, allowing for a path from one solution to the other.  This connectivity gives rise to the emergence of saddle points.. We can further speculate on the existence of saddle points in higher dimensions based on the above understanding. When solutions exist on both sides of a certain policy boundary and there are no other critical points nearby, saddle points will emerge. As shown in Figure \ref{fig::implicit_bias_true_2_semi_dqn} (a) and Figure \ref{fig::dynamics_grid_world} (a), the trajectory need to go cross the policy boundary to converge to another critical point.

\subsection{Divergence and implicit bias of the semi-gradient method}
\label{sec::two_dimension_dynamics}

After demonstrating the effective loss landscape and comparing its differences with the loss landscape, we are prepared to unveil the implicit bias of the semi-gradient method in both two-dimensional and high-dimensional scenario.


\textbf{Semi-gradient method bias against a saddle point.} 
Here, we start from the two-dimensional scenario. Figure \ref{fig::implicit_bias_true_2_semi} is generated with mini-batch $\{(s_1, a_1, s_2, r), (s_1, a_2, s_3, r)\}$. With a fixed learning rate of $0.1$ and a start point $(-2,1)$, we initially train the model using the residual gradient descent(red) for 25,000 steps ($t_1$). Afterwards, we switch to the semi-gradient descent(blue) and continue training for another 25,000 steps. Figure \ref{fig::implicit_bias_true_2_semi} (a) displays the loss landscape and the training dynamics. It is noticeable that after transit to the semi-gradient method, the training dynamics escape from $\theta_{\pi_2}$, cross the policy boundary at time $t_2$, and converge to $\theta_{\pi_1}$. This behavior also implies that $\theta_{\pi_2}$ is a saddle point for effective landscape. In Figure \ref{fig::implicit_bias_true_2_semi} (b), a significant alteration in the state value is observed after the switch to the semi-gradient method. Combining Figure \ref{fig::implicit_bias_true_2_semi} (a) and (c), it is evident that the loss reaches its apex when the training dynamics cross the policy boundary. The training trajectory of the semi-gradient descent in Figure \ref{fig::implicit_bias_true_2_semi} (a) consist with the statement given in Theorem \ref{thm::SaddPointRange}.

\begin{figure}[htb]
\centering
\begin{subfigure}[t]{.31\textwidth}
  \centering
  \includegraphics[width=1\linewidth]{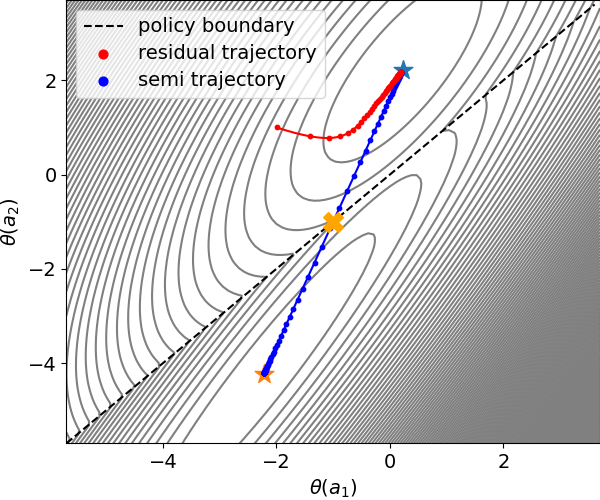}
  \caption{trajectories}
  \label{fig:sub-first}
\end{subfigure}
\hspace{2mm}
\begin{subfigure}[t]{.31\textwidth}
  \centering
  \includegraphics[width=1\linewidth]{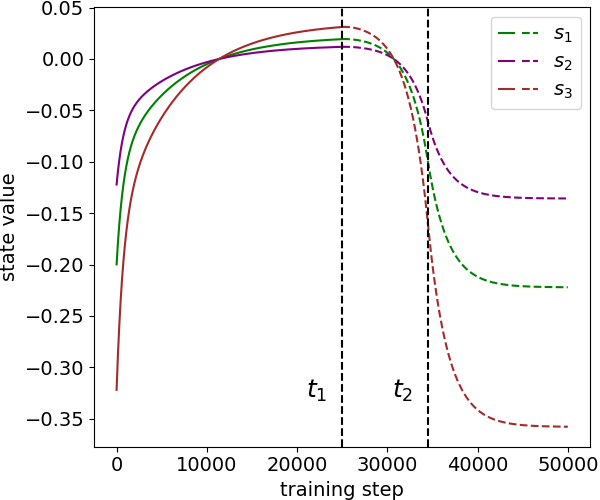}
  \caption{state value}
  \label{fig:sub-first}
\end{subfigure}
\hspace{2mm}
\begin{subfigure}[t]{.31\textwidth}
  \centering
  \includegraphics[width=1\linewidth]{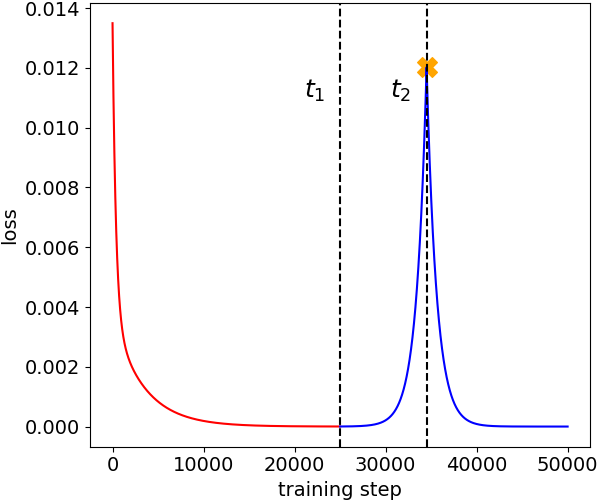}
  \caption{loss}
  \label{fig:sub-first}
\end{subfigure}
\caption{
Training dynamics of two methods with the mini-batch $\{(s_1, a_1, s_2, r), (s_1, a_2, s_3, r)\}$. Initially, we employ residual gradient descent (red) for $25,000$ ($t_1$) steps, followed by a transition to semi-gradient descent (blue) for another $25,000$ steps. In (a), the dynamics converge towards $\theta_{\pi_2}$ under residual gradient descent, but exhibit a bias against $\theta_{\pi_2}$ towards $\theta_{\pi_1}$ under semi-gradient descent. In (b), the state values undergo significant changes after the transition. A comparison between (a) and (b) reveals that under semi-gradient descent, when the dynamics cross the policy boundary ($t_2$), the loss is the maxima. 
}
\label{fig::implicit_bias_true_2_semi}
\end{figure}

\textbf{Semi-gradient method bias against a saddle point in high dimension.} 
We expand the experiment to a high-dimensional parameter space. Figure \ref{fig::implicit_bias_true_2_semi_dqn} is also generated by the data set $\{(s_1, a_1, s_2, r)$, $ (s_1, a_2, s_3, r)\}$, but under a DQN setting. The $Q$ is approximated by a two-layer fully connected neural network with $100$ neurons. During initialization, each neuron in the first layer was set with a weight of $1.6$ and a bias of $-0.001$, while each neuron in the second layer was set with a weight of $0.01$ and a bias of $-0.001$. Given these initial conditions and a constant learning rate of $0.002$, we first using residual gradient descent for $10,000$ steps ($t_1$), followed by semi-gradient descent for another $10,000$ steps. Figure \ref{fig::implicit_bias_true_2_semi_dqn} (a) shows the action with maximum value for three states during the training process. In comparison with Figure \ref{fig::implicit_bias_true_2_semi_dqn} (c), the loss reach the maxima when cross the policy boundary($t_2$), which is similar with Figure \ref{fig::implicit_bias_true_2_semi}. In addition, the dynamics of state values in Figure \ref{fig::implicit_bias_true_2_semi_dqn} (b) is also similar with Figure \ref{fig::implicit_bias_true_2_semi} (b). These similarities suggest that the convergence position of the residual gradient descent is a saddle point on the effective loss landscape.

\begin{figure}[htb]
\centering
\begin{subfigure}[t]{.31\textwidth}
  \centering
  \includegraphics[width=1\linewidth]{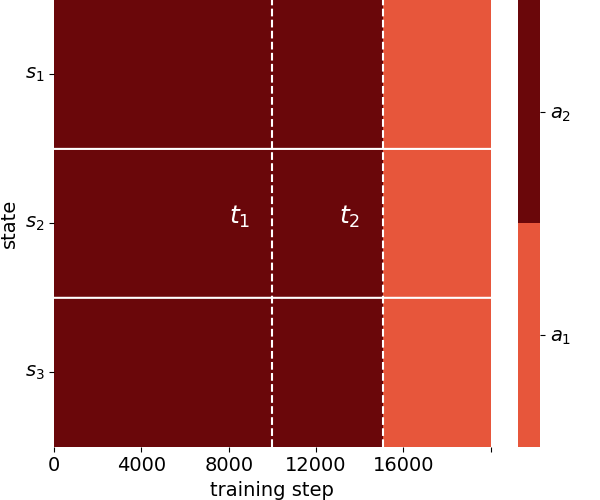}
  \caption{policy}
  \label{fig:sub-first}
\end{subfigure}
\hspace{2mm}
\begin{subfigure}[t]{.31\textwidth}
  \centering
  \includegraphics[width=1\linewidth]{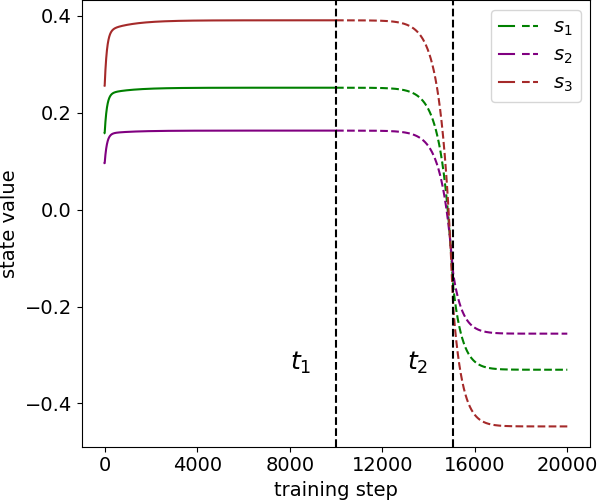}
  \caption{state value}
  \label{fig:sub-first}
\end{subfigure}
\hspace{2mm}
\begin{subfigure}[t]{.31\textwidth}
  \centering
  \includegraphics[width=1\linewidth]{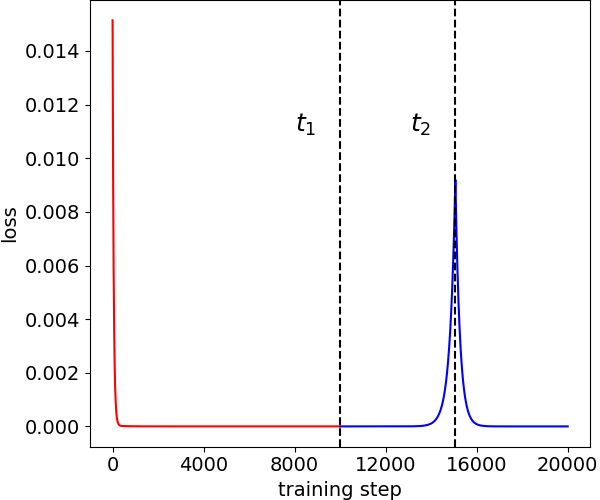}
  \caption{loss}
  \label{fig:sub-first}
\end{subfigure}
\caption{
The training dynamics of two methods with the mini-batch $\{(s_1, a_1, s_2, r), (s_1, a_2, s_3, r)\}$ involve a neural network. Initially, we employ residual gradient descent for $10,000$ ($t_1$) steps and then transition to semi-gradient descent for another $10,000$ steps. In (a), the action with maximum state value for each state during training is demonstrated. A comparison between (a) and (c) reveals that the dynamics cross the policy boundary when the loss reaches its peak ($t_2$). This implies that the saddle point and the global minima that semi-gradient converges are situated on opposite sides of the policy boundary. (b) illustrates a significant change in state values after switching to semi-gradient descent. The observations above indicate that the residual gradient descent converges to a saddle point, and this saddle point, along with the global minima to which the semi-gradient converges, is located on opposite sides of the policy boundary.
}
\label{fig::implicit_bias_true_2_semi_dqn}
\end{figure}

\section{Implicit Bias of the Semi-gradient Method with More Realistic Data}
\label{sec::high_dimension}

\begin{exam}[a grid world environment]
    \label{exam::grid_world}
    Given a grid world environment as illustrated in Figure \ref{fig::grid_world_example}, the state space $S$ consists of $19$ states, with $s_{16}$ being the terminal state. The action space $A = \{a_1, a_2, a_3, a_4\}$ corresponds to the directions "up, down, left, right," respectively. When a state is at the boundary, an action that would cross this boundary results in a bounce-back to the current state, i.e., $f(s_1, a_1) = s_1$. The reward function is defined as follows: a reward of $-1$ is given when the next state is $s_{12}$, $s_{13}$, or $s_{14}$, and a reward of $+1$ is granted when the next state is $s_{15}$. The discount factor is set to $\gamma = 0.98$.
\end{exam}

\begin{figure}[htb]
\centering
\includegraphics[width=0.25\linewidth]{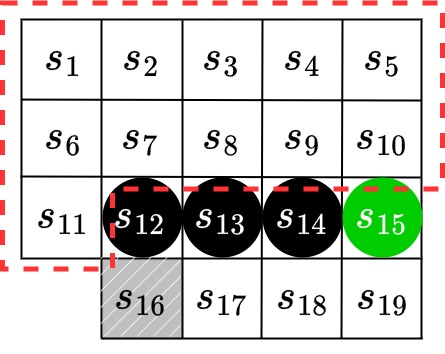}
\caption{A grid world environment. }
\label{fig::grid_world_example}
\end{figure}

\textbf{Data sampling strategy and neural network settings.}
In order to demonstrate the existence of saddle point in a larger sample dataset, we consider a mini-batch contains the Cartesian product of all states within the red box in Figure \ref{fig::grid_world_example} and the entire action space, which has totally 44 sample data. The embeddings for these 15 states are represented by a randomly sampled $15 \times 15$ matrix, and after sampling, each row is normalized. The random seed used for sampling is $4$. Furthermore, we employ a four-layer fully connected network to approximate the $Q$ function, where the first hidden layer has a width of 512, the second hidden layer has a width of 1024, and the third hidden layer has a width of 1024. The model is initialized with random seed $75$. Additionally, we use an NVIDIA GeForce RTX 3080 GPU to train the model.

\textbf{Existence of saddle point in high dimension with more realistic data.}
Given the above settings, we initially trained the model using residual gradient descent for 10,000 ($t_1$) steps, employing a learning rate of 0.3, momentum of 0.8, and damping of 0.1. Subsequently, the model was trained using semi-gradient descent for 15,000 steps with a learning rate of 0.1. In Figure \ref{fig::dynamics_grid_world} (a), we demonstrated the action with maximum value, specifically focused on steps 16,000 to 17,000 (the complete image is available in Figure \ref{fig::policy_dynamics_full}), where $t_2$ corresponds to the peak error in (c). Notably, around the time of $t_2$, the training dynamics crossed a policy boundary, consistent with the observations in Figure \ref{fig::implicit_bias_true_2_semi} (a). In addition, there is a significant shift in state values as depicted in (b). These observations suggest that residual gradient descent converges to a saddle point in the effective loss landscape.

\begin{figure}[htb]
\centering
\begin{subfigure}[t]{.31\textwidth}
  \centering
  \includegraphics[width=1\linewidth]{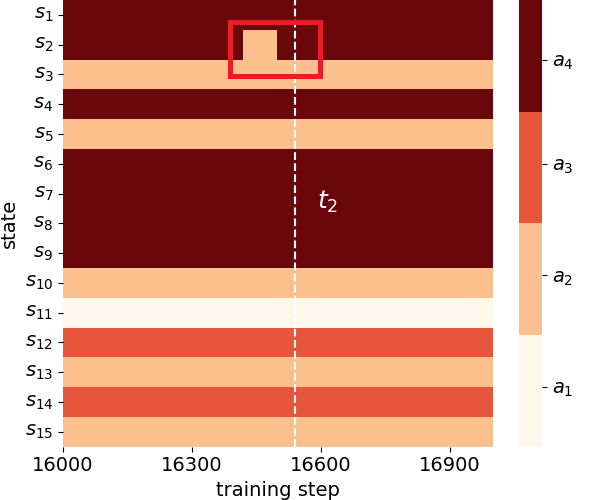}
  \caption{policy}
  \label{fig:sub-first}
\end{subfigure}
\hspace{2mm}
\begin{subfigure}[t]{.31\textwidth}
  \centering
  \includegraphics[width=1\linewidth]{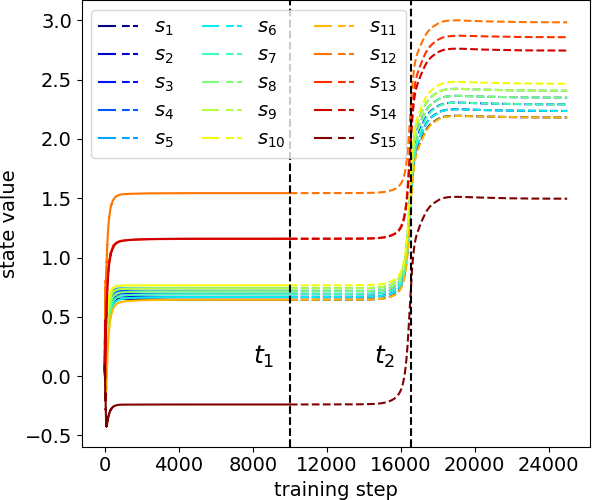}
  \caption{state value}
  \label{fig:sub-first}
\end{subfigure}
\hspace{2mm}
\begin{subfigure}[t]{.31\textwidth}
  \centering
  \includegraphics[width=1\linewidth]{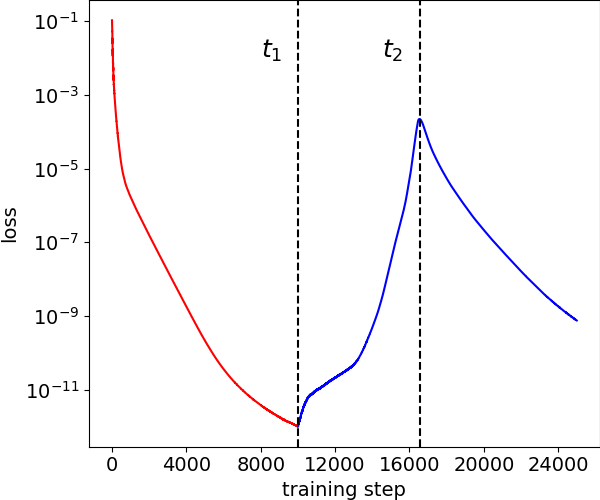}
  \caption{loss}
  \label{fig:sub-first}
\end{subfigure}
\caption{
The training dynamics of two methods using Example \ref{exam::grid_world} and a neural network were compared. Initially, we used residual gradient descent for $10,000$ ($t_1$) steps and then switched to semi-gradient descent for $15,000$ steps. In (a), we highlighted the action with the maximum value, focusing on steps 16,000 to 17,000. Around $t_2$, which corresponds to the peak error in (c), the training dynamics crossed a policy boundary, aligning with findings in Figure \ref{fig::implicit_bias_true_2_semi} (a). Notably, there was a significant shift in state values, as shown in (b). The observations above indicate that the residual gradient descent converges to a saddle point, and this saddle point, along with the global minima to which the semi-gradient converges, is located on opposite sides of the policy boundary.
}
\label{fig::dynamics_grid_world}
\end{figure}

\section{Analyze the Transition of Global Minimum to Saddle Point in $\mathbb{R}^2$}
\label{sec::theorem}

\begin{assum}
    \label{assum:mdpAssum}
    Given a MDP with $A=\{a_1,a_2\}$ and sample two data from it. Assume the following three statements hold: 
    \begin{enumerate}[label = (\arabic*), itemindent = 0pt, labelindent = \parindent, labelwidth = 2em, labelsep = 5pt, leftmargin = *]
        \item The sample data is $\{(s_{\alpha},a_1,s'_{\alpha},C)$, $(s_{\beta},a_2,s'_{\beta},C)\}$ with $C < 0$. 
        
        \item $Q(s,a)=\phi(s)\theta(a)$, where $\theta = [\theta(a_1), \theta(a_2)]^{\rm{T}}$, $\theta(a_1), \theta(a_2) \in \mathbb{R}$. 
        
        \item $\phi(s_{\alpha}), \phi(s_{\beta}) > 0$, $\phi(s'_{\alpha}),\phi(s'_{\beta}) \geq 0$ and $\phi(s_{\alpha}) - \gamma \phi(s'_{\alpha}) \neq 0$, $\phi(s_{\beta})-\gamma \phi(s'_{\beta}) \neq 0$. 
    \end{enumerate}
\end{assum}

Under Assumption \ref{assum:mdpAssum}, the negative residual gradient and semi-gradient, which is the "force" in Fokker--Planck equation, is defined as follow. We define the policy for states $s'_{\alpha}$, $s'_{\beta}$ as $\pi_1(s'_{\alpha})=a_1$, $\pi_1(s'_{\beta})=a_1$ and $\pi_2(s'_{\alpha})=a_2$, $\pi_2(s'_{\beta})=a_2$. The negative residual gradient with $\pi_1$ is defined as
\begin{equation}
    \label{eq::trueGradientPi1}
    F^{\pi_1}_{\rm{res}}(\theta) = \left (
    \begin{aligned}
        & -(\phi(s_{\alpha}) - \gamma \phi(s'_{\alpha}))[\phi(s_{\alpha})\theta(a_1) - C - \gamma \phi(s'_{\alpha})\theta(a_1)] \\  
        & ~~~~~ + \gamma \phi(s'_{\beta})[\phi(s_{\beta}) \theta(a_2) - C - \gamma \phi(s'_{\beta}) \theta(a_1)]], \\
        & -\phi(s_{\alpha})[\phi(s_{\alpha})\theta(a_2) - C - \gamma \phi(s'_{\alpha})\theta(a_1)]
    \end{aligned}
    \right).
\end{equation}
The negative residual gradient with $\pi_2$ is defined as
\begin{equation}
    \label{eq::trueGradientPi2}
    F^{\pi_2}_{\rm{res}}(\theta) = \left(
    \begin{aligned}
        & -\phi(s_{\alpha})[\phi(s_{\alpha})\theta(a_1) - C - \gamma \phi(s'_{\alpha})\theta(a_2)], \\
        & \gamma\phi(s'_{\alpha})[\phi(s_{\alpha})\theta(a_1) - C - \gamma \phi(s'_{\alpha})\theta(a_2)] \\ 
        & ~~~~~ - (\phi(s_{\beta}) - \gamma \phi(s'_{\beta}))[\phi(s_{\beta})\theta(a_2) - C - \gamma \phi(s'_{\beta})\theta(a_2)]
    \end{aligned}
    \right).
\end{equation}
The negative semi-gradient with $\pi_1$ is defined as
\begin{equation}
\label{eq::semiGradientPi1}
    F^{\pi_1}_{\rm{semi}}(\theta) = \left(
    \begin{aligned}
        & - \phi(s_{\alpha})[\phi(s_{\alpha})\theta(a_1) - C - \gamma \phi(s'_{\alpha})\theta(a_1)], \\
       & -\phi(s_{\beta})[\phi(s_{\beta})\theta(a_2) - C - \gamma \phi(s'_{\beta})\theta(a_1)] 
    \end{aligned}
    \right).
\end{equation}
The negative semi-gradient with $\pi_2$ is defined as
\begin{equation}
\label{eq::semiGradientPi2}
    F^{\pi_2}_{\rm{semi}}(\theta) = \left (
    \begin{aligned}
        & - \phi(s_{\alpha})[\phi(s_{\alpha})\theta(a_1) - C - \gamma \phi(s'_{\alpha})\theta(a_2)], \\
        & -\phi(s_{\beta})[\phi(s_{\beta})\theta(a_2) - C - \gamma \phi(s'_{\beta})\theta(a_2)] 
    \end{aligned}
    \right).
\end{equation}
We define $F_{\rm{semi}}(\theta)$ and $F_{\rm{res}}(\theta)$ as the uniform representation of the force vector. Specifically, $F_{\rm{semi}}(\theta) = F^{\pi_1}_{\rm{semi}}(\theta)$ and $F_{\rm{res}}(\theta) = F^{\pi_1}_{\rm{res}}(\theta)$ when $\theta(a_1) \geq \theta(a_2)$, and $F_{\rm{semi}}(\theta) = F^{\pi_2}_{\rm{semi}}(\theta)$ and $F_{\rm{res}}(\theta) = F^{\pi_2}_{\rm{res}}(\theta)$ when $\theta(a_1) \leq \theta(a_2)$.

\begin{lem}[existence of solution]
    \label{lem::existenceOfGloablMin}
    Suppose Assumption \ref{assum:mdpAssum} holds, the solution for the Bellman optimal loss with policy $\pi_1$ is $\theta_{\pi_1}(a_1) = \frac{C}{\phi(s_{\alpha}) - \gamma \phi(s'_{\alpha})}$ and $\theta_{\pi_1}(a_2) = \frac{C}{\phi(s_{\beta})} + \frac{C\gamma \phi(s'_{\beta})}{\phi(s_{\beta}) \Bigl( \phi(s_{\alpha}) - \gamma \phi(s'_{\alpha}) \Bigr)}$, it exists when $\frac{\phi(s_{\beta})-\gamma \phi(s'_{\beta}) }{\phi(s_{\alpha})-\gamma \phi(s'_{\alpha})} \leq 1$. The solution with policy $\pi_2$ is $\theta_{\pi_2}(a_1) = \frac{C}{\phi(s_{\alpha})} +  \frac{C\gamma \phi(s'_{\alpha})}{\phi(s_{\alpha}) \Bigl( \phi(s_{\beta}) - \gamma \phi(s'_{\beta}) \Bigr)} $ and $\theta_{\pi_2}(a_2) = \frac{C}{\phi(s_{\beta}) - \gamma \phi(s'_{\beta})}$, it exists when $\frac{\phi(s_{\alpha})-\gamma \phi(s'_{\alpha})}{ \phi(s_{\beta})-\gamma \phi(s'_{\beta})} \leq 1$.
\end{lem}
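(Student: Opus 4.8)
\textbf{Proof proposal for Lemma \ref{lem::existenceOfGloablMin}.}

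The plan is to locate the critical points of the empirical Bellman optimal loss $\mathcal{L}$ on each side of the policy boundary separately, using the fact that on a fixed policy region the $\max$ operator becomes a single linear term and $\mathcal{L}$ reduces to a quadratic in $\theta$. First I would treat the $\pi_1$ region $\{\theta(a_1) \geq \theta(a_2)\}$: there the mini-batch $\{(s_{\alpha}, a_1, s'_{\alpha}, C), (s_{\beta}, a_2, s'_{\beta}, C)\}$ gives $\max_{a'} Q(s'_\alpha, a') = \phi(s'_\alpha)\theta(a_1)$ and $\max_{a'} Q(s'_\beta, a') = \phi(s'_\beta)\theta(a_1)$, so $\mathcal{L}$ is the sum of two squared affine forms in $\theta$. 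Setting the residual gradient \eqref{eq::trueGradientPi1} to zero, the second component immediately forces $\phi(s_{\alpha})\theta(a_1) - C - \gamma\phi(s'_\alpha)\theta(a_1) = 0$ (using $\phi(s_\alpha) > 0$), which yields $\theta_{\pi_1}(a_1) = C/(\phi(s_\alpha) - \gamma\phi(s'_\alpha))$; this is well-defined by Assumption \ref{assum:mdpAssum}(3). Substituting this into the vanishing of the first bracket $\phi(s_\beta)\theta(a_2) - C - \gamma\phi(s'_\beta)\theta(a_1) = 0$ and solving for $\theta(a_2)$ gives the stated formula for $\theta_{\pi_1}(a_2)$ after clearing denominators. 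The $\pi_2$ region is handled by the symmetric computation with \eqref{eq::trueGradientPi2}, swapping the roles of $(s_\alpha, a_1)$ and $(s_\beta, a_2)$.

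Next I would pin down the existence conditions, which is really a consistency check that the algebraically-found critical point actually lies in the region whose policy was assumed. For $\theta_{\pi_1}$ the requirement is $\theta_{\pi_1}(a_1) \geq \theta_{\pi_1}(a_2)$; substituting the two closed forms and simplifying (again using $\phi(s_\beta) > 0$, $\phi(s'_\beta) \geq 0$, $C < 0$ from Assumption \ref{assum:mdpAssum}) reduces this inequality to $\dfrac{\phi(s_\beta) - \gamma\phi(s'_\beta)}{\phi(s_\alpha) - \gamma\phi(s'_\alpha)} \leq 1$. Care is needed with the sign of $\phi(s_\alpha) - \gamma\phi(s'_\alpha)$, since multiplying the inequality $\theta_{\pi_1}(a_1) \geq \theta_{\pi_1}(a_2)$ through by this quantity (or its square) can flip the direction; I would organize the computation so the comparison is expressed as the sign of a product, e.g. $\bigl(\theta_{\pi_1}(a_1) - \theta_{\pi_1}(a_2)\bigr)\cdot\bigl(\phi(s_\alpha)-\gamma\phi(s'_\alpha)\bigr)^2$ or similar, to avoid case-splitting on the sign. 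The condition for $\theta_{\pi_2}$ follows by the same symmetry.

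The only genuine subtlety — and the step I expect to demand the most care — is the boundary case and the direction of the inequalities: verifying that when the critical point lands exactly on the line $\theta(a_1) = \theta(a_2)$ the two formulas agree, and checking that the "$\leq 1$" (rather than "$<1$") is the correct closed form, i.e. that both solutions coexist precisely when the ratio equals $1$. I would also briefly note uniqueness of the critical point within each open region: since $\mathcal{L}$ restricted to a policy region is a convex quadratic, the zero of its gradient is unique there, so no further critical points are missed. The remaining manipulations are routine algebra and I would relegate the full clearing-of-denominators to a short displayed computation rather than spelling out every line.
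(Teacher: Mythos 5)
Your proposal is correct and follows essentially the paper's own route: the paper likewise reduces each fixed-policy region to the $2\times 2$ linear Bellman system, reads off the closed-form solution, and obtains the existence condition by imposing the region constraint $\theta_{\pi_1}(a_1) \geq \theta_{\pi_1}(a_2)$ (resp.\ $\theta_{\pi_2}(a_2) \geq \theta_{\pi_2}(a_1)$), dividing by $C<0$ and keeping $\phi(s_{\alpha})-\gamma\phi(s'_{\alpha})$ inside the final ratio so that no case split on its sign is needed. One minor bookkeeping slip in your write-up: at a critical point the second gradient component annihilates the $\beta$-residual $\phi(s_{\beta})\theta(a_2)-C-\gamma\phi(s'_{\beta})\theta(a_1)$ (via $\phi(s_{\beta})>0$), and the $\alpha$-residual then vanishes from the first component because $\phi(s_{\alpha})-\gamma\phi(s'_{\alpha})\neq 0$ — not the order you stated — but this does not change the linear system, the formulas, or the conditions.
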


\begin{remark}
    If $\frac{\phi(s_{\alpha})-\gamma \phi(s'_{\alpha})}{ \phi(s_{\beta})-\gamma \phi(s'_{\beta})} = \frac{\phi(s_{\beta})-\gamma \phi(s'_{\beta}) }{\phi(s_{\alpha})-\gamma \phi(s'_{\alpha})} = 1$, the two solutions coincide and lie on the policy boundary. The two solutions are distinct and exist only if the following conditions are met: 1) $\phi(s_{\alpha})-\gamma \phi(s'_{\alpha}) > 0$, $\phi(s_{\beta})-\gamma \phi(s'_{\beta}) < 0$; or 2) $\phi(s_{\alpha})-\gamma \phi(s'_{\alpha}) < 0$, $\phi(s_{\beta})-\gamma \phi(s'_{\beta}) > 0$.
\end{remark}

\begin{lem}[smoothness for effective loss landscape]
    \label{lem::continuitySemi}
    Suppose Assumption \ref{assum:mdpAssum} holds, for the semi-gradient in all $\theta \in \mathbb{R}^2$ we have $\lim_{\norm{h} \to 0} \left(  F_{\rm{semi}}(\theta + h) - F_{\rm{semi}}(\theta - h)  \right)  = 0$. Besides, for the residual gradient, there exist $\theta \in \mathbb{R}^2$ such that $\lim_{\norm{h} \to 0} \left (  F_{\rm{res}}(\theta + h) - F_{\rm{res}}(\theta - h)  \right) \neq 0$.
\end{lem}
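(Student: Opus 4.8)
The plan is to work directly from the explicit formulas \eqref{eq::semiGradientPi1}--\eqref{eq::semiGradientPi2} for the semi-gradient and \eqref{eq::trueGradientPi1}--\eqref{eq::trueGradientPi2} for the residual gradient. The only place where $F_{\rm semi}$ or $F_{\rm res}$ can fail to be continuous is the policy boundary $B = \{\theta(a_1) = \theta(a_2)\}$, since on each open half-plane the force is a fixed affine (hence smooth) map of $\theta$. So the whole argument reduces to comparing the $\pi_1$-branch and the $\pi_2$-branch on $B$. For a point $\theta_0 \in B$ I would write $\theta_0 = (c,c)$ and evaluate each coordinate of $F^{\pi_1}_{\rm semi}(\theta_0)$ and $F^{\pi_2}_{\rm semi}(\theta_0)$: in \eqref{eq::semiGradientPi1} the terms $\gamma\phi(s'_{\alpha})\theta(a_1)$ and $\gamma\phi(s'_{\beta})\theta(a_1)$ become $\gamma\phi(s'_{\alpha})c$ and $\gamma\phi(s'_{\beta})c$, which is exactly what \eqref{eq::semiGradientPi2} gives with $\theta(a_2)=c$. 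Hence $F^{\pi_1}_{\rm semi}(\theta_0) = F^{\pi_2}_{\rm semi}(\theta_0)$ for every $\theta_0 \in B$, so both one-sided limits of $F_{\rm semi}$ at any boundary point agree, giving $\lim_{\norm{h}\to 0}\bigl(F_{\rm semi}(\theta+h) - F_{\rm semi}(\theta-h)\bigr) = 0$; away from $B$ continuity is immediate from affineness, so the limit is $0$ there too.

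For the residual gradient I would exhibit a single witness point on $B$ where the two branches disagree. Comparing the first coordinates of \eqref{eq::trueGradientPi1} and \eqref{eq::trueGradientPi2} at $\theta_0=(c,c)$: \eqref{eq::trueGradientPi1} contributes $-(\phi(s_{\alpha})-\gamma\phi(s'_{\alpha}))[\phi(s_{\alpha})c - C - \gamma\phi(s'_{\alpha})c] + \gamma\phi(s'_{\beta})[\phi(s_{\beta})c - C - \gamma\phi(s'_{\beta})c]$, whereas \eqref{eq::trueGradientPi2} contributes $-\phi(s_{\alpha})[\phi(s_{\alpha})c - C - \gamma\phi(s'_{\alpha})c]$. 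Their difference is $\gamma\phi(s'_{\alpha})[\phi(s_{\alpha})c - C - \gamma\phi(s'_{\alpha})c] + \gamma\phi(s'_{\beta})[\phi(s_{\beta})c - C - \gamma\phi(s'_{\beta})c]$, which is a nonconstant affine function of $c$ (its slope involves $\phi(s'_{\alpha}), \phi(s'_{\beta})$ and the $\phi(s)$'s) and therefore is nonzero for all but at most one value of $c$. Picking any such $c$ and setting $\theta = (c,c)$, the one-sided limits of $F_{\rm res}$ from the two half-planes differ, so $\lim_{\norm{h}\to 0}\bigl(F_{\rm res}(\theta+h) - F_{\rm res}(\theta-h)\bigr) \neq 0$. (If one wants a clean closed form, one can further specialize to the case $\phi(s'_{\alpha}) = \phi(s'_{\beta}) = 0$, $C\neq 0$, where the difference collapses to $0$, so instead one should keep the $s'$-embeddings generic — e.g. use the embeddings of Example \ref{exam::simpleMDP} — to make the slope visibly nonzero.)

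I expect no serious obstacle: the lemma is essentially a bookkeeping check that the two affine branches of the semi-gradient are forced to match on the seam because the only $\theta$-dependence that switches ($\theta(a_1)$ versus $\theta(a_2)$ inside the bootstrap term) is continuous across $\{\theta(a_1)=\theta(a_2)\}$, while for the residual gradient the $\nabla\max$ term introduces extra $\theta(a_1)$- and $\theta(a_2)$-dependent pieces (the coefficients $\gamma\phi(s'_{\alpha})$, $\gamma\phi(s'_{\beta})$, and the $(\phi-\gamma\phi')$ factors) whose net effect does not cancel on the seam. The only mild care needed is (i) to phrase the one-sided limits correctly — approaching $\theta_0\in B$ through the interior of each half-plane — and (ii) in the residual case to make sure the chosen witness $c$ avoids the measure-zero set where the affine difference happens to vanish, which is why I would pin down an explicit numerical instance rather than argue in full generality.
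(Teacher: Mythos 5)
Your proposal is correct and follows essentially the same route as the paper: reduce everything to the policy boundary $\{\theta(a_1)=\theta(a_2)\}$, note that the two affine branches of the semi-gradient coincide there (the paper checks this by letting $\theta\pm h$ straddle the boundary and sending $\|h\|\to 0$), and exhibit the nonzero jump $\gamma\phi(s'_{\alpha})[\phi(s_{\alpha})c-C-\gamma\phi(s'_{\alpha})c]+\gamma\phi(s'_{\beta})[\phi(s_{\beta})c-C-\gamma\phi(s'_{\beta})c]$ of the residual gradient across the boundary, which is exactly the first coordinate of the limit computed in the paper's proof. Your parenthetical caveat about the degenerate case $\phi(s'_{\alpha})=\phi(s'_{\beta})=0$, where the jump vanishes and one must pick a nondegenerate witness, is a refinement the paper's own proof silently assumes away rather than a discrepancy with it.
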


\begin{remark}
    Lemma \ref{lem::continuitySemi} prove the continuity of semi-gradient in $\mathbb{R}^2$, this directly indicates the smoothness of the effective loss function.
\end{remark}


\begin{thm}[implicit bias of semi-gradient]
    \label{thm::SaddPointRange}
    Suppose Assumption \ref{assum:mdpAssum} holds and $\phi(s_{\alpha})=\phi(s_{\beta})$, given a set $\Omega=\{ \theta \in \mathbb{R}^2 | \theta = \lambda \theta_{\pi_1} + (1-\lambda) \theta_{\pi_2}, \lambda \in (0,1) \}$, the following two statements hold:
    \begin{enumerate}[label = (\arabic*), itemindent = 0pt, labelindent = \parindent, labelwidth = 2em, labelsep = 5pt, leftmargin = *]
        \item if $\phi(s_{\alpha}) - \gamma \phi(s'_{\alpha}) > 0$ and $\phi(s_{\beta}) - \gamma \phi(s'_{\beta}) < 0$, then for all $\theta \in \Omega \cup \{ \theta(a_2) \geq \theta(a_1) \}$, $\frac{\langle (\theta_{\pi_1} - \theta_{\pi_2}), F^{\pi_2}_{\rm{semi}}(\theta_{\eta}) \rangle}{\norm{\theta_{\pi_1} - \theta_{\pi_2}} \norm{F^{\pi_2}_{\rm{semi}}(\theta_{\eta})}}=1$ and for all $\theta' \in \Omega \cup \{ \theta(a_2) \leq \theta(a_1) \}$, $\frac{\langle (\theta_{\pi_1} - \theta_{\pi_2}), F^{\pi_1}_{\rm{semi}}(\theta'_{\eta}) \rangle}{\norm{\theta_{\pi_1} - \theta_{\pi_2}} \norm{F^{\pi_1}_{\rm{semi}}(\theta'_{\eta})}} = 1$.
        
        \item if $\phi(s_{\alpha}) - \gamma \phi(s'_{\alpha}) < 0$ and $\phi(s_{\beta}) - \gamma \phi(s'_{\beta}) > 0$, then for all $\theta \in \Omega \cup \{ \theta(a_2) \geq \theta(a_1) \}$, $\frac{\langle (\theta_{\pi_1} - \theta_{\pi_2}), F^{\pi_2}_{\rm{semi}}(\theta_{\eta}) \rangle}{\norm{\theta_{\pi_1} - \theta_{\pi_2}} \norm{F^{\pi_2}_{\rm{semi}}(\theta_{\eta})}}=-1$ and for all $\theta' \in \Omega \cup \{ \theta(a_2) \leq \theta(a_1) \}$, $\frac{\langle (\theta_{\pi_1} - \theta_{\pi_2}), F^{\pi_1}_{\rm{semi}}(\theta'_{\eta}) \rangle}{\norm{\theta_{\pi_1} - \theta_{\pi_2}} \norm{F^{\pi_1}_{\rm{semi}}(\theta'_{\eta})}} = -1$.
    \end{enumerate}
    $\langle \cdot \rangle$ donated as the inner product.
\end{thm}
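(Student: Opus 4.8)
The plan is to show that the semi-gradient force $F^{\pi_2}_{\rm{semi}}$, when evaluated along the segment $\Omega$ joining the two solutions (and into the half-plane where $\pi_2$ is active), is exactly collinear with the direction $\theta_{\pi_1}-\theta_{\pi_2}$, with a definite sign; and symmetrically for $F^{\pi_1}_{\rm{semi}}$ on the other side. The key simplification is the extra hypothesis $\phi(s_{\alpha})=\phi(s_{\beta})=:\phi_0$, which should collapse both components of each semi-gradient into a common scalar multiple of a fixed vector. I will first substitute the explicit solution formulas from Lemma~\ref{lem::existenceOfGloablMin} to get closed forms for $\theta_{\pi_1}$, $\theta_{\pi_2}$, and hence for $\theta_{\pi_1}-\theta_{\pi_2}$; under $\phi(s_{\alpha})=\phi(s_{\beta})$ I expect $\theta_{\pi_1}-\theta_{\pi_2}$ to come out proportional to a simple vector such as $(1,-1)^{\rm T}$ up to a sign governed by the signs of $\phi(s_{\alpha})-\gamma\phi(s'_{\alpha})$ and $\phi(s_{\beta})-\gamma\phi(s'_{\beta})$.

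Next I would parameterize a generic point $\theta_{\eta}=\eta\theta_{\pi_1}+(1-\eta)\theta_{\pi_2}$ on the segment (the subscript $\eta$ in the statement), plug it into \eqref{eq::semiGradientPi2}, and compute the two components of $F^{\pi_2}_{\rm{semi}}(\theta_{\eta})$. The crucial structural fact is that $\theta_{\pi_2}$ is a zero of $F^{\pi_2}_{\rm{semi}}$ (it is the critical point for policy $\pi_2$), so $F^{\pi_2}_{\rm{semi}}$ is an affine (indeed linear) function of $\theta-\theta_{\pi_2}$ whose linear part is the matrix $-\phi_0\,\mathrm{diag}(\phi(s_{\alpha})-\gamma\phi(s'_{\alpha}),\ \phi(s_{\beta})-\gamma\phi(s'_{\beta}))$ acting on $(\theta(a_1),\theta(a_2))$. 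Evaluated at $\theta_{\eta}-\theta_{\pi_2}=\eta(\theta_{\pi_1}-\theta_{\pi_2})$, this gives $F^{\pi_2}_{\rm{semi}}(\theta_{\eta}) = \eta\cdot(-\phi_0)\,\mathrm{diag}(\cdots)(\theta_{\pi_1}-\theta_{\pi_2})$. If $\theta_{\pi_1}-\theta_{\pi_2}$ is (a multiple of) $(1,-1)^{\rm T}$ and the two diagonal entries have \emph{opposite} signs, then the diagonal matrix scales the first coordinate and the second coordinate by factors of opposite sign, so the product is again a multiple of $(1,-1)^{\rm T}$ — i.e. collinear with $\theta_{\pi_1}-\theta_{\pi_2}$ — and a sign check of the scalar multiplier yields the cosine $=+1$ in case~(1) and $=-1$ in case~(2). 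The same computation with the roles of $\pi_1,\pi_2$ and $a_1,a_2$ swapped handles the $F^{\pi_1}_{\rm{semi}}$ claim; one must also confirm that on the half-plane $\{\theta(a_2)\ge\theta(a_1)\}$ it is indeed $F^{\pi_2}_{\rm{semi}}$ that is the active force (per the uniform-representation convention just above the lemma), so that the computed collinearity is meaningful for the dynamics.

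The main obstacle I anticipate is purely bookkeeping: verifying the sign of the scalar coefficient in every subcase. One has to track (i) the sign of the proportionality constant relating $\theta_{\pi_1}-\theta_{\pi_2}$ to $(1,-1)^{\rm T}$, which depends on $C<0$ and on which of $\phi(s_{\alpha})-\gamma\phi(s'_{\alpha})$, $\phi(s_{\beta})-\gamma\phi(s'_{\beta})$ is positive; and (ii) the signs $-\phi_0(\phi(s_{\alpha})-\gamma\phi(s'_{\alpha}))$ and $-\phi_0(\phi(s_{\beta})-\gamma\phi(s'_{\beta}))$ multiplying the two coordinates. Because these signs flip between case~(1) and case~(2), they must combine to give $+1$ in the first and $-1$ in the second; I would organize this as a short case table rather than prose. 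A secondary subtlety is that the claim is stated for all $\theta$ in $\Omega$ together with an entire half-plane, not just the open segment; since $F^{\pi_2}_{\rm{semi}}$ is globally linear, the same diagonal-matrix argument applies verbatim to any $\theta-\theta_{\pi_2}$ that is a \emph{nonnegative} multiple of $(1,-1)^{\rm T}$, so I would argue that every point of $\Omega\cup\{\theta(a_2)\ge\theta(a_1)\}$ relevant to the statement has $\theta-\theta_{\pi_2}$ of that form (or reduce to the segment case by noting the cosine only depends on the direction). Finally I should note the degenerate possibility that $F^{\pi_i}_{\rm{semi}}(\theta_{\eta})=0$, which is excluded on the open segment precisely because $\theta_{\pi_1}\ne\theta_{\pi_2}$ under the distinctness conditions of the remark following Lemma~\ref{lem::existenceOfGloablMin}, keeping the normalized inner product well-defined.
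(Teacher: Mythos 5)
Your overall skeleton (use that $F^{\pi_2}_{\rm semi}$ vanishes at $\theta_{\pi_2}$ and is affine, then show the segment direction is preserved by its linear part) is a legitimate and arguably cleaner route than the paper's, which simply substitutes the closed forms from Lemma~\ref{lem::existenceOfGloablMin} into the cosine and verifies its sign. But the concrete linear-algebra facts you rely on are wrong. First, the Jacobian of $F^{\pi_2}_{\rm semi}$ is not $-\phi_0\,\mathrm{diag}\bigl(\phi(s_{\alpha})-\gamma\phi(s'_{\alpha}),\,\phi(s_{\beta})-\gamma\phi(s'_{\beta})\bigr)$: the first component of \eqref{eq::semiGradientPi2} contains the bootstrap term $\gamma\phi(s'_{\alpha})\theta(a_2)$, so the linear part is the \emph{triangular} matrix $-\phi_0\begin{pmatrix}\phi_0 & -\gamma\phi(s'_{\alpha})\\ 0 & \phi(s_{\beta})-\gamma\phi(s'_{\beta})\end{pmatrix}$ (diagonal only if $\phi(s'_{\alpha})=0$). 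Second, $\theta_{\pi_1}-\theta_{\pi_2}$ is not along $(1,-1)^{\rm T}$: subtracting the two Bellman systems (with $\phi(s_{\alpha})=\phi(s_{\beta})=\phi_0$) gives $\theta_{\pi_1}-\theta_{\pi_2}=\frac{\gamma}{\phi_0}\bigl(\theta_{\pi_1}(a_1)-\theta_{\pi_2}(a_2)\bigr)\,(\phi(s'_{\alpha}),\phi(s'_{\beta}))^{\rm T}$, a vector whose components have the \emph{same} sign. Third, even granting your two claims, the mechanism is internally inconsistent: a diagonal matrix $\mathrm{diag}(c_1,c_2)$ maps $(1,-1)^{\rm T}$ to $(c_1,-c_2)^{\rm T}$, which is a multiple of $(1,-1)^{\rm T}$ only when $c_1=c_2$; opposite-sign entries (your cases (1)/(2)) would destroy precisely the collinearity you need. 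So as written the proof does not go through.

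The repair keeps your structure but replaces the false claims: with $d:=\theta_{\pi_1}-\theta_{\pi_2}\propto(\phi(s'_{\alpha}),\phi(s'_{\beta}))^{\rm T}$, one checks that $d$ is an eigenvector of the true triangular Jacobian of $F^{\pi_2}_{\rm semi}$ with eigenvalue $-\phi(s_{\beta})\bigl(\phi(s_{\beta})-\gamma\phi(s'_{\beta})\bigr)$ (equivalently, compute $F^{\pi_2}_{\rm semi}(\theta_{\pi_1})=-\gamma\bigl(\theta_{\pi_1}(a_1)-\theta_{\pi_1}(a_2)\bigr)\bigl(\phi(s_{\alpha})\phi(s'_{\alpha}),\phi(s_{\beta})\phi(s'_{\beta})\bigr)^{\rm T}$, which is parallel to $d$ exactly because $\phi(s_{\alpha})=\phi(s_{\beta})$ --- this is where that hypothesis enters); then on the $\pi_2$-side portion of the segment $F^{\pi_2}_{\rm semi}(\theta)=\eta\,M_2 d$ with $\eta>0$, and the sign of the eigenvalue gives $+1$ under condition (1) and $-1$ under condition (2), with the symmetric computation for $F^{\pi_1}_{\rm semi}$ and eigenvalue $-\phi(s_{\alpha})\bigl(\phi(s_{\alpha})-\gamma\phi(s'_{\alpha})\bigr)$. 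Also note the sets in the statement are really intersections of $\Omega$ with the half-planes (the paper's proof rewrites them as sub-segments ending at the boundary point $\theta_{\lambda^*}$); the collinearity holds only along that segment, not on an entire half-plane, so your attempt to extend the argument to all of $\Omega\cup\{\theta(a_2)\ge\theta(a_1)\}$ is not needed and would in fact be false off the segment.
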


\begin{remark}
    Theorem \ref{thm::SaddPointRange} offers a theoretical explanation of the implicit bias of semi-gradient as illustrated in Figure \ref{fig::implicit_bias_true_2_semi}. This theorem states that if condition (1) is met and the training dynamics lie on the line between $\theta_{\pi_1}$ and $\theta_{\pi_2}$, they will converge along the line towards $\theta_{\pi_1}$. Conversely, if condition (2) is met, the training dynamics will converge along the line towards $\theta_{\pi_2}$.
\end{remark}

\begin{remark}
    Theorem \ref{thm::SaddPointRange} also suggests that if condition (1) holds, then $\theta_{\pi_2}$ is a saddle point; if condition (2) holds, then $\theta_{\pi_1}$ is a saddle point. This is due to the fact that both $\theta_{\pi_1}$ and $\theta_{\pi_2}$ are critical points—global/local minima or saddle points—in the effective loss landscape. Only saddle points exhibit a trajectory moving away from it. The existence of such trajectories is guaranteed by Theorem \ref{thm::SaddPointRange}, thereby confirming the presence of saddle points.
\end{remark}

\section{Conclusion and Discussion}
\label{sec::conclusion}
In this work, we primarily discuss the implicit bias of semi-gradient Q-learning. Specifically, we first constructed and visualized the effective loss landscape within a two-dimensional parameter space, based on Wang's potential landscape theory. Through visualization, we discovered that global minima in the loss landscape can transition into saddle points in the effective loss landscape. This transition makes the semi-gradient method bias against the convergence point found by the residual-gradient method. Subsequently, we demonstrated that a global minima on the loss landscape can also transit to a saddle point on the effective loss landscape when using neural networks to approximate the state-action value function $Q$. This paper provides a new approach for understanding the implicit bias of semi-gradient Q-learning within the parameter space.

\textbf{Limitation:} This work only provides a theoretical understanding of the implicit bias in two-dimensional parameter space, but the theoretical understanding in higher-dimensional spaces is lacking. We are going to provide a theoretical understanding of the implicit bias of the semi-gradient Q-learning with neural networks in future works.

\bibliographystyle{unsrt}  
\bibliography{references}  

\newpage

\begin{appendix}

\section{Wang's Potential Landscape Theory}
\label{sec::fpeInfo}
In this section, we introduce Fokker Planck equation and Wang's potential landscape theory as a preparation for constructing the effective loss landscape for the semi-gradient method. Let's consider a Fokker--Planck equation with two variables, 
\begin{equation}
    \frac{ \partial \rho(x_1, x_2,t)}{\partial t} = - \sum_{i=1}^2 \frac{\partial }{\partial x_i} [F_i(x_1,x_2,t) \rho(x_1,x_2,t)] + \sum_{i=1}^2 \sum_{j=1}^2 \frac{\partial^2}{\partial x_i \partial x_j} [D_{ij}(x_1,x_2,t) \rho(x_1,x_2,t)].
\end{equation}
$\rho$ is a distribution, $F$ is the force vector or drift term, $D$ is the diffusion matrix. We assume $D = \sigma I$, where $I$ is the identity matrix and $\sigma$ is the diffusion constant, and the force is time-independent $F(x_1,x_2,t) = F(x_1,x_2)$. Then the Fokker--Planck equation can be reduce to
\begin{align}
     \frac{ \partial \rho(x_1, x_2,t)}{\partial t} & = - \sum_{i=1}^2 \frac{\partial }{\partial x_i} \left\{F_i(x_1,x_2) \rho(x_1,x_2,t) - \sigma \frac{\partial\rho(x_1,x_2,t)}{\partial x_i}\right\} \\
     & = -\nabla \cdot \{F(x_1,x_2) \rho(x_1,x_2,t) - \sigma \nabla \rho(x_1,x_2,t)\} = -\nabla \cdot J(x_1,x_2,t).
\end{align}
$J(x_1,x_2,t)$ is the probability flux vector. We define $\rho_{ss}$ as a stationary distribution, $J_{ss}(x_1,x_2)$ as the flux corresponding to it, and we have 
\begin{equation}
    \frac{\partial \rho_{ss}(x_1,x_2)}{\partial t} = 0 \Rightarrow \nabla \cdot J_{ss}(x_1,x_2) = 0.
\end{equation}
There are two different values for the flux to reach the stationary distribution. The first case is $J_{ss}(x_1,x_2) = 0$, which means 
\begin{equation}
    F(x_1,x_2) \rho_{ss}(x_1,x_2,t)  - \sigma \nabla \rho_{ss}(x_1,x_2,t) = 0.
\end{equation}
This condition is called detailed balance and zero flux lead to equilibrium. Under this condition, the force can be regard as a negative gradient of a loss function $F(x_1, x_2) = - \nabla U(x_1,x_2)$, then the stationary distribution is calculated as
\begin{equation}
    \rho_{ss}(x_1,x_2) = \exp\left\{ - \frac{1}{\sigma} U(x_1,x_2) \right\}.
\end{equation}
For the second case, we have $\nabla \cdot J_{ss}(x_1,x_2) = 0$ and $J_{ss}(x_1,x_2) \neq 0$, current $\rho_{ss}(x_1,x_2)$ is called as Non-Equilibrium Stationary State (NESS). Under NESS, the force vector can be decomposed into two terms, which is
\begin{equation}
    F(x_1,x_2) = \frac{J_{ss}(x_1,x_2)}{\rho_{ss}(x_1,x_2)} + \frac{\sigma}{\rho_{ss}(x_1,x_2)} \nabla \rho_{ss}(x_1,x_2,t) = \frac{J_{ss}(x_1,x_2)}{\rho_{ss}(x_1,x_2)} + \sigma \nabla \ln \rho_{ss}(x_1,x_2).
\end{equation}
Consider the NESS as a Boltzmann-Gibbs form distribution $\rho_{ss}(x_1,x_2) = \exp{(-\widetilde{U}(x_1,x_2))}$ and $\widetilde{U}(x_1,x_2)$ is effective loss or non-equilibrium loss, then the decomposition reduce to
\begin{equation}
    F(x_1,x_2) = \frac{J_{ss}(x_1,x_2)}{\rho_{ss}(x_1,x_2)} - \sigma \nabla \widetilde{U}(x_1,x_2).
\end{equation}
However, in this case, there are no general analytic solution. So when the force vector is not a gradient of a analytic form loss function, most of the time we can only use numerical method to solve the NESS. So in the following section, we use a numerical method to solve the NESS with the drift term as the semi-gradient method. 

We defined $\frac{J_{ss}(x_1,x_2)}{\rho_{ss}(x_1,x_2)}$ as the effective flux and $- \sigma \nabla \widetilde{U}(x_1,x_2)$ as the effective gradient. The effective flux, effective gradient and force vector satisfies the parallelogram law. The loss landscape is defined as $\ln \rho_{ss}(x_1,x_2)$.


\newpage

\section{Proof of Theorems}
\label{sec::proof}
\begin{proof}[Proof of Lemma \ref{lem::existenceOfGloablMin}]
    From the Assumption \ref{assum:mdpAssum}, policy $\pi_1$ represents $\theta(a_1) \geq \theta(a_2)$, and policy $\pi_2$ represents $\theta(a_1) \leq \theta(a_2)$, $\theta(a_1)=\theta(a_2)$ is the policy boundary. Solving the following system of linear equation with policy $\pi_1$, 
    \begin{equation}
        \left \{
        \begin{aligned}
            & \phi(s_{\alpha})\theta(a_1) =  C + \gamma \phi(s'_{\alpha})\theta(a_1), \\
            & \phi(s_{\beta})\theta(a_2) = C + \gamma \phi(s'_{\beta})\theta(a_1).
        \end{aligned}
        \right.
    \end{equation}
    We have 
    \begin{equation}
        \theta_{\pi_1}(a_1) = \frac{C}{\phi(s_{\alpha}) - \gamma \phi(s'_{\alpha})} 
        \text{ and }
        \theta_{\pi_1}(a_2) = \frac{C}{\phi(s_{\beta})} + \frac{C\gamma \phi(s'_{\beta})}{\phi(s_{\beta}) \Bigl( \phi(s_{\alpha}) - \gamma \phi(s'_{\alpha}) \Bigr)}.
    \end{equation}
    This solution exist only when $\theta_{\pi_1}(a_1) \geq \theta_{\pi_1}(a_2)$, then we have
    \begin{equation}
        \left( 1-\frac{\gamma \phi(s'_{\beta})}{\phi(s_{\beta})} \right) \frac{1}{\phi(s_{\alpha}) - \gamma \phi(s'_{\alpha})} \leq \frac{1}{\phi(s_{\beta})} \Rightarrow \frac{\phi(s_{\beta})-\gamma \phi(s'_{\beta}) }{\phi(s_{\alpha})-\gamma \phi(s'_{\alpha})} \leq 1.
    \end{equation}
    Solving the following system of linear equation with policy $\pi_2$, 
    \begin{equation}
        \left \{
        \begin{aligned}
            & \phi(s_{\alpha})\theta(a_1) =  C + \gamma \phi(s'_{\alpha})\theta(a_2), \\
            & \phi(s_{\beta})\theta(a_2) = C + \gamma \phi(s'_{\beta})\theta(a_2).
        \end{aligned}
        \right.
    \end{equation}
    We have 
    \begin{equation}
        \theta_{\pi_2}(a_1) = \frac{C}{\phi(s_{\alpha})} +  \frac{C\gamma \phi(s'_{\alpha})}{\phi(s_{\alpha}) \Bigl( \phi(s_{\beta}) - \gamma \phi(s'_{\beta}) \Bigr)} \text{ and } \theta_{\pi_2}(a_2) = \frac{C}{\phi(s_{\beta}) - \gamma \phi(s'_{\beta})}.
    \end{equation}
    This solution exist only when $\theta_{\pi_1}(a_1) \geq \theta_{\pi_1}(a_2)$, then we have
    \begin{equation}
        \left( 1 - \frac{\gamma \phi(s'_{\alpha})}{\phi(s_{\alpha})} \right) \frac{1}{\phi(s_{\beta}) - \gamma \phi(s'_{\beta})} \leq \frac{1}{\phi(s_{\alpha})} \Rightarrow \frac{\phi(s_{\alpha})-\gamma \phi(s'_{\alpha})}{ \phi(s_{\beta})-\gamma \phi(s'_{\beta})} \leq 1.
    \end{equation}
\end{proof}

\begin{proof}[Proof of Lemma \ref{lem::continuitySemi}]
    We first define the set for the policy boundary as $\Omega := \{\theta \in \mathbb{R}^2 | \theta(a_1) = \theta(a_2)\}$, the set for policy $\pi_1$ as $\Omega^{\pi_1} := \{\theta \in \mathbb{R}^2 | \theta(a_1) > \theta(a_2)\}$ and the set for policy $\pi_2$ as $\Omega^{\pi_2} := \{\theta \in \mathbb{R}^2 | \theta(a_1) < \theta(a_2)\}$. It is easy to check the continuity of semi-gradient in non-boundary area, so here we only consider the $\theta \in \Omega$. Assume $h = [h_1, h_2]^{\text{T}}$ and $h_1 > h_2$, so we have $\theta + h \in \Omega^{\pi_1}$ and $\theta - h \in \Omega^{\pi_2}$. For the semi-gradient we have
    \begin{equation}
        \begin{aligned}
            & \lim_{\norm{h} \to 0} \left( F_{\text{Semi}}(\theta + h) - F_{\text{Semi}}(\theta - h) \right)
            = \lim_{\norm{h} \to 0} \left( F^{\pi_1}_{\text{Semi}}(\theta + h) - F^{\pi_2}_{\text{Semi}}(\theta - h) \right) \\
            = & \lim_{\norm{h} \to 0} \left(
            \begin{aligned}
                & \phi(s_{\alpha}) \left(  \gamma (h_1 + h_2) \phi(s'_{\alpha}) - 2 h_1 \phi(s_{\alpha})  \right) \\
                & \phi(s_{\alpha}) \left(  \gamma (h_1 + h_2) \phi(s'_{\beta}) - 2 h_2 \phi(s_{\beta})  \right)
            \end{aligned}
            \right) = 0.
        \end{aligned}
    \end{equation}
    so we have $\lim_{h \to 0} \abs{  F_{\text{Semi}}(\theta + h) - F_{\text{Semi}}(\theta - h)  } = 0$. For the residual gradient, we have
    \begin{equation}
        \begin{aligned}
            & \lim_{\norm{h} \to 0} \left( F_{\text{res}}(\theta + h) - F_{\text{res}}(\theta - h) \right )= 
            \lim_{\norm{h} \to 0} \left( F^{\pi_1}_{\text{res}}(\theta + h) - F^{\pi_2}_{\text{res}}(\theta - h) \right ) \\
            & = \left(
            \begin{aligned}
                & -\gamma \,\left(C\,\phi(s'_{\beta})+C\,\phi(s'_{\alpha})+\gamma \,{\phi(s'_{\beta})}^2\,\theta(a_1)+\gamma \,{\phi(s'_{\alpha})}^2\,\theta(a_1) \right. \\
                &~~~~~ \left. -\phi(s_{\beta})\,\phi(s'_{\beta})\,\theta(a_1)-\phi(s_{\alpha})\,\phi(s'_{\alpha})\,\theta(a_1)\right), \\ 
                & \phi(s_{\alpha})\,\left(C-\phi(s_{\beta})\,\theta(a_1)+\gamma \,\phi(s'_{\beta})\,\theta(a_1)\right) \\
                &~~~~~ -\left(\phi(s_{\beta})-\gamma \,\phi(s'_{\beta})\right)\,\left(C-\phi(s_{\beta})\,\theta(a_1) +\gamma \,\phi(s'_{\beta})\,\theta(a_1)\right) \\
                &~~~~~ +\gamma \,\phi(s'_{\alpha})\,\left(C-\phi(s_{\alpha})\,\theta(a_1)+\gamma \,\phi(s'_{\alpha})\,\theta(a_1)\right) 
            \end{aligned}
            \right) \neq 0
        \end{aligned}
    \end{equation}
\end{proof}

\begin{proof}[Proof of Theorem \ref{thm::SaddPointRange}]
    Define $\Omega = \{\theta \in \mathbb{R}^2 | \lambda \theta_{\pi_1} + (1-\lambda) \theta_{\pi_2}, \lambda \in (0,1) \}$. The parameter in both $\Omega$ and policy boundary is
    \begin{equation*}
        \theta_{\lambda^*} = \lambda \theta_{\pi_1} + (1-\lambda) \theta_{\pi_2} \text{ and } \lambda^* = \frac{\phi(s_{\beta})\,\left(\phi(s_{\alpha})-\gamma \,\phi(s'_{\alpha})\right)}{\gamma \,\left(\phi(s_{\beta})\,\phi(s'_{\alpha})-\phi(s'_{\beta})\,\phi(s_{\alpha})\right)}.
    \end{equation*}
    It is easy to verify $0<\lambda^*<1$ under the condition given by (1) and (2). Then we have
    \begin{equation*}
        \begin{aligned}
            \Omega \cup \{ \theta(a_2) \leq \theta(a_1) \} & = \{\theta \in \mathbb{R}^2 | \eta \theta_{\pi_1} + (1-\eta) \theta_{\lambda^*}, \eta \in [0,1) \} \\
            \Omega \cup \{ \theta(a_2) \geq \theta(a_1) \} & = \{\theta \in \mathbb{R}^2 | \eta \theta_{\pi_2} + (1-\eta) \theta_{\lambda^*}, \eta \in [0,1) \} 
        \end{aligned}
    \end{equation*}
    Calculate the two terms in the statement, we have
    \begin{equation*}
        \begin{aligned}
            & \frac{\langle (\theta_{\pi_1} - \theta_{\pi_2}), F^{\pi_2}_{\rm{semi}}(\theta_{\eta}) \rangle}{\norm{\theta_{\pi_1} - \theta_{\pi_2}} \norm{F^{\pi_2}_{\rm{semi}}(\theta_{\eta})}} 
            = \frac{\langle (\theta_{\pi_1} - \theta_{\pi_2}), F^{\pi_1}_{\rm{semi}}(\theta_{\eta}) \rangle}{\norm{\theta_{\pi_1} - \theta_{\pi_2}} \norm{F^{\pi_1}_{\rm{semi}}(\theta_{\eta})}} \\
            = & \frac{C^2\,\gamma \,\left(1-\eta\right)\,\left({\phi(s'_{\beta}})^2+{\phi(s'_{\alpha}})^2\right)\,{\left(\phi(s_{\beta})-\phi(s_{\alpha})-\gamma \,\phi(s'_{\beta})+\gamma \,\phi(s'_{\alpha})\right)}^2}{\left(\phi(s_{\beta})-\gamma \,\phi(s'_{\beta})\right)\,\Big(\phi(s_{\alpha})-\gamma \,\phi(s'_{\alpha})\Big)\,\left(\phi(s_{\beta})\,\phi(s'_{\alpha})-\phi(s'_{\beta})\,\phi(s_{\alpha})\right) \fM \fN}.
        \end{aligned}
    \end{equation*}
    where
    \begin{equation*}
        \begin{aligned}
            \fM & = \sqrt{\frac{C^2\,\left({\phi(s_{\beta}})^2\,{\phi(s'_{\beta}})^2+{\phi(s_{\alpha}})^2\,{\phi(s'_{\alpha}})^2\right)\,{\left(1-\eta\right)}^2\,{\left(\phi(s_{\beta})-\phi(s_{\alpha})-\gamma \,\phi(s'_{\beta})+\gamma \,\phi(s'_{\alpha})\right)}^2}{{\left(\phi(s_{\beta})\,\phi(s'_{\alpha})-\phi(s'_{\beta})\,\phi(s_{\alpha})\right)}^2}}, \\
            \fN & = \sqrt{\frac{C^2\,\gamma ^2\,\left({\phi(s_{\beta}})^2\,{\phi(s'_{\alpha}})^2+{\phi(s'_{\beta}})^2\,{\phi(s_{\alpha}})^2\right)\,{\left(\phi(s_{\beta})-\phi(s_{\alpha})-\gamma \,\phi(s'_{\beta})+\gamma \,\phi(s'_{\alpha})\right)}^2}{{\phi(s_{\beta}})^2\,{\phi(s_{\alpha}})^2\,{\left(\phi(s_{\beta})-\gamma \,\phi(s'_{\beta})\right)}^2\,{\Big(\phi(s_{\alpha})-\gamma \,\phi(s'_{\alpha})\Big)}^2}}.
        \end{aligned}
    \end{equation*}
    By the condition given in (1), which is $\phi(s_{\alpha})-\gamma \,\phi(s'_{\alpha}) > 0$ and $\phi(s_{\beta}) - \gamma \phi(s'_{\beta}) < 0$, we have $\phi(s_{\beta})\,\phi(s'_{\alpha})-\phi(s'_{\beta})\,\phi(s_{\alpha}) < 0$ and 
    \begin{equation*}
        \frac{\langle (\theta_{\pi_1} - \theta_{\pi_2}), F^{\pi_2}_{\rm{semi}}(\theta_{\eta}) \rangle}{\norm{\theta_{\pi_1} - \theta_{\pi_2}} \norm{F^{\pi_2}_{\rm{semi}}(\theta_{\eta})}} = \frac{\langle (\theta_{\pi_1} - \theta_{\pi_2}), F^{\pi_1}_{\rm{semi}}(\theta_{\eta}) \rangle}{\norm{\theta_{\pi_1} - \theta_{\pi_2}} \norm{F^{\pi_1}_{\rm{semi}}(\theta_{\eta})}} = 1.
    \end{equation*}
    By the condition given in (2), which is $\phi(s_{\alpha})-\gamma \,\phi(s'_{\alpha}) < 0$ and $\phi(s_{\beta}) - \gamma \phi(s'_{\beta}) > 0$, we have $\phi(s_{\beta})\,\phi(s'_{\alpha})-\phi(s'_{\beta})\,\phi(s_{\alpha}) > 0$ and
    \begin{equation*}
        \frac{\langle (\theta_{\pi_1} - \theta_{\pi_2}), F^{\pi_2}_{\rm{semi}}(\theta_{\eta}) \rangle}{\norm{\theta_{\pi_1} - \theta_{\pi_2}} \norm{F^{\pi_2}_{\rm{semi}}(\theta_{\eta})}} = \frac{\langle (\theta_{\pi_1} - \theta_{\pi_2}), F^{\pi_1}_{\rm{semi}}(\theta_{\eta}) \rangle}{\norm{\theta_{\pi_1} - \theta_{\pi_2}} \norm{F^{\pi_1}_{\rm{semi}}(\theta_{\eta})}} = -1.
    \end{equation*} 
\end{proof}

\newpage

\section{Additional Figures}
\label{sec::addition_figs}

\begin{figure}[h!]
\centering
\begin{subfigure}[t]{.4\textwidth}
  \centering
  \includegraphics[width=1\linewidth]{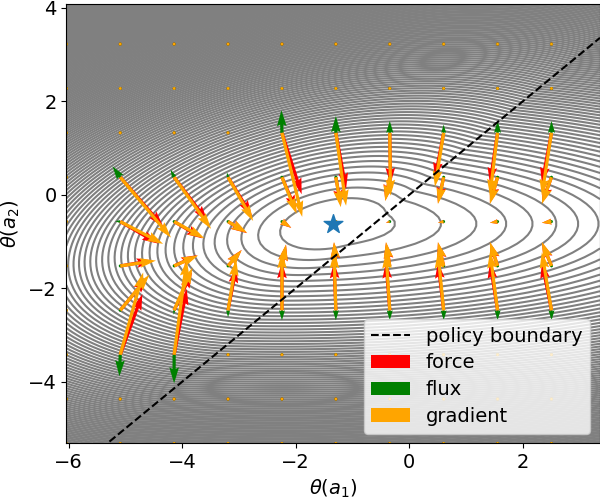}
  \caption{loss landscape}
  \label{fig:sub-first}
\end{subfigure}
\hspace{10mm}
\begin{subfigure}[t]{.4\textwidth}
  \centering
  \includegraphics[width=1\linewidth]{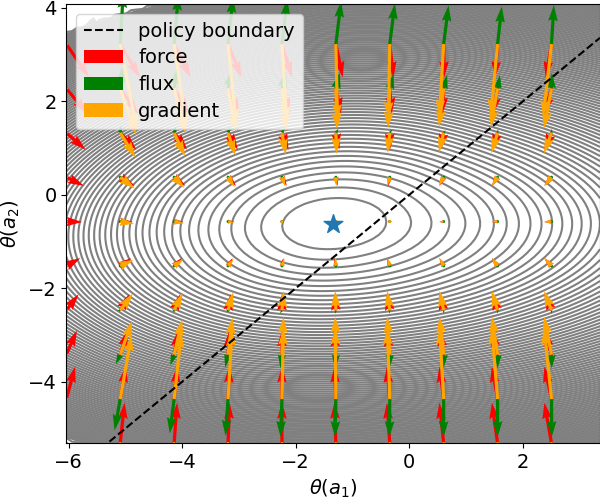}
  \caption{effective loss landscape}
  \label{fig:sub-first}
\end{subfigure}
\caption{Loss landscapes with mini-batch $\{(s_1, a_1, s_2, r)$ ,$(s_3, a_2, s_4, r)\}$. There is single critical point in the (effective) loss landscape. The two loss landscape is similar.}
\label{fig::loss_landscape_s1_a1_s2_s3_a2_s4}
\end{figure}

\begin{figure}[h!]
\centering
\begin{subfigure}[t]{.4\textwidth}
  \centering
  \includegraphics[width=1\linewidth]{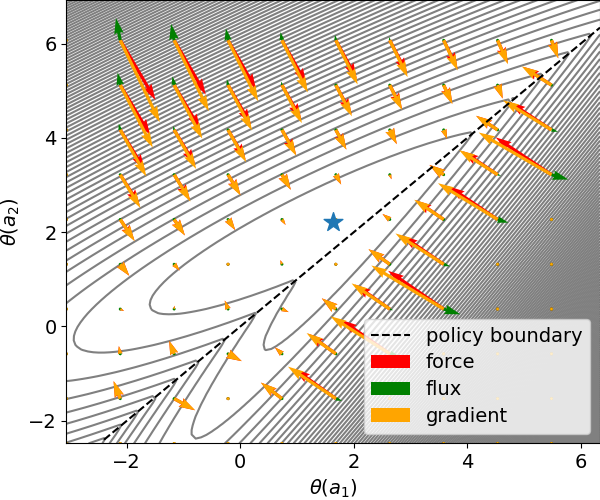}
  \caption{loss landscape}
  \label{fig:sub-first}
\end{subfigure}
\hspace{10mm}
\begin{subfigure}[t]{.4\textwidth}
  \centering
  \includegraphics[width=1\linewidth]{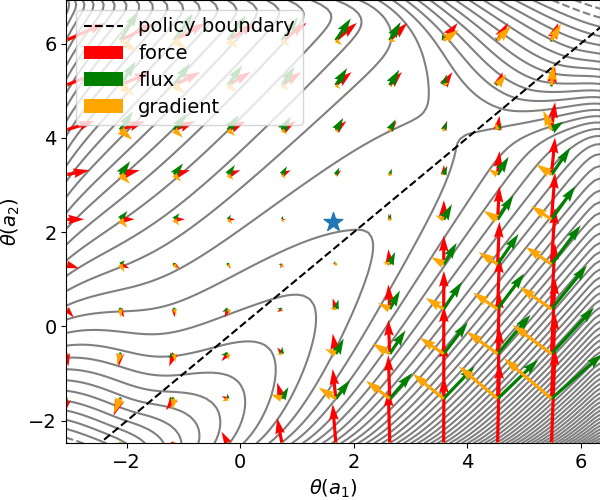}
  \caption{effective loss landscape}
  \label{fig:sub-first}
\end{subfigure}
\caption{Loss landscapes with mini-batch $\{(s_2, a_1, s_1, r)$, $(s_1, a_2, s_3, r)\}$.There is a single critical point in the (effective) loss landscape. Based on the contour's shape, this critical point in the effective loss landscape appears to be a saddle point. Due to the solution's proximity to the policy boundary (blue star), the critical point identified by the Fokker-Planck equation is slightly distant from the true critical point (blue star) because of the numerical error.}
\label{fig::loss_landscape_s2_a1_s1_s1_a2_s3}
\end{figure}

\begin{figure}[h!]
\centering
\begin{subfigure}[t]{.4\textwidth}
  \centering
  \includegraphics[width=1\linewidth]{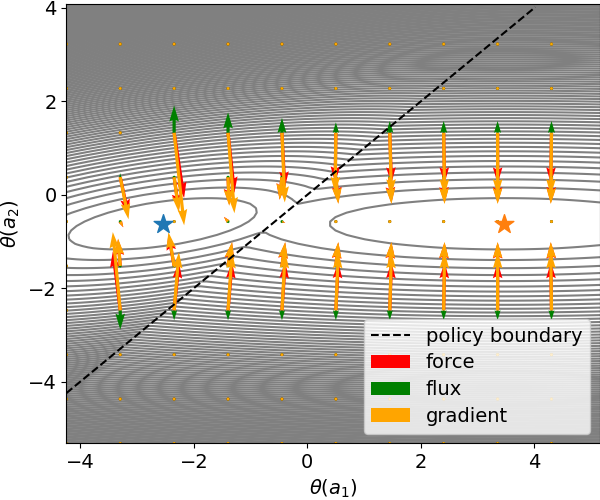}
  \caption{loss landscape}
  \label{fig:sub-first}
\end{subfigure}
\hspace{10mm}
\begin{subfigure}[t]{.4\textwidth}
  \centering
  \includegraphics[width=1\linewidth]{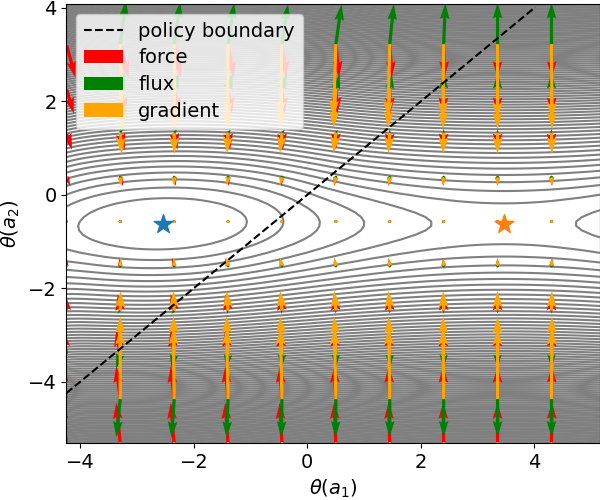}
  \caption{effective loss landscape}
  \label{fig:sub-first}
\end{subfigure}
\caption{Loss landscapes with mini-batch $\{(s_2, a_1, s_1, r)$, $(s_3, a_2, s_4, r)\}$. The $\theta_{\pi_1}$ in the effective loss landscape is a saddle point.}
\label{fig::loss_landscape_s2_a1_s1_s3_a2_s4}
\end{figure}

\begin{figure}[h!]
\centering
\begin{subfigure}[t]{.4\textwidth}
  \centering
  \includegraphics[width=1\linewidth]{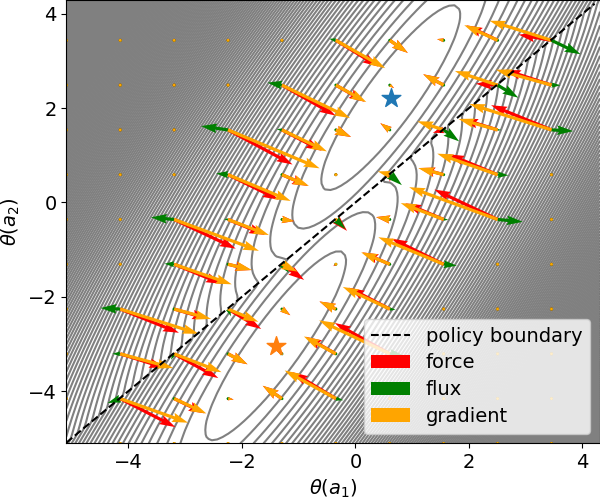}
  \caption{loss landscape}
  \label{fig:sub-first}
\end{subfigure}
\hspace{10mm}
\begin{subfigure}[t]{.4\textwidth}
  \centering
  \includegraphics[width=1\linewidth]{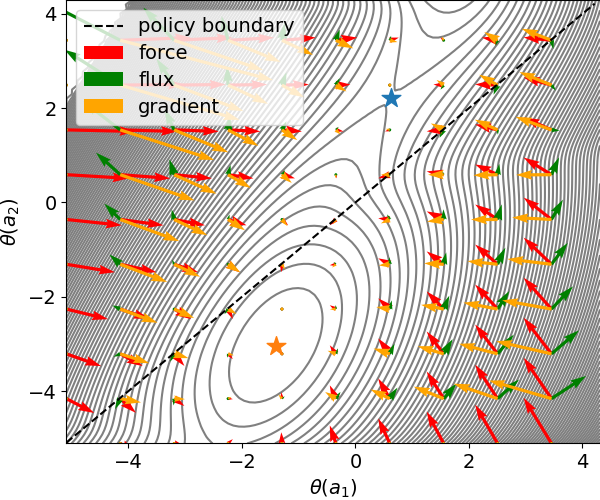}
  \caption{effective loss landscape}
  \label{fig:sub-first}
\end{subfigure}
\caption{Loss landscapes with mini-batch $\{(s_2, a_1, s_1, r)$, $(s_3, a_2, s_4, r)\}$. The $\theta_{\pi_2}$ in the effective loss landscape is a saddle point.}
\label{fig::loss_landscape_s3_a1_s1_s1_a2_s3}
\end{figure}

\begin{figure}[h!]
\centering
\begin{subfigure}[t]{.4\textwidth}
  \centering
  \includegraphics[width=1\linewidth]{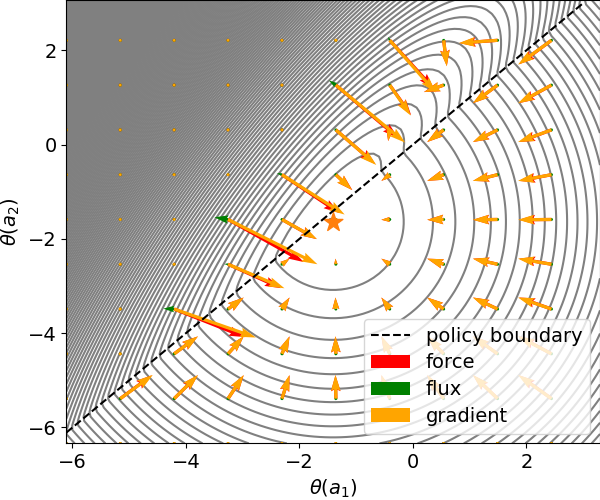}
  \caption{loss landscape}
  \label{fig:sub-first}
\end{subfigure}
\hspace{10mm}
\begin{subfigure}[t]{.4\textwidth}
  \centering
  \includegraphics[width=1\linewidth]{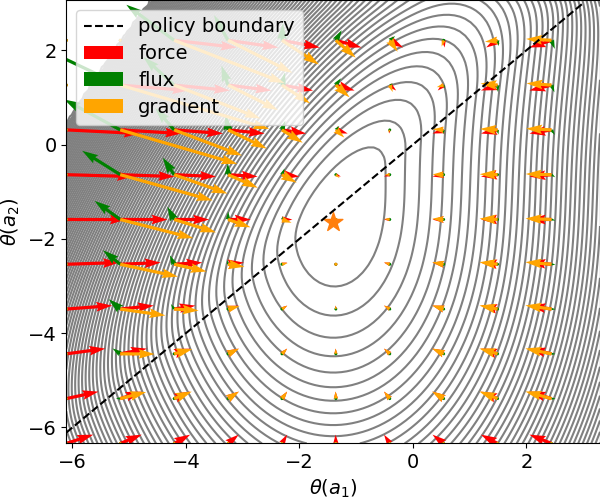}
  \caption{effective loss landscape}
  \label{fig:sub-first}
\end{subfigure}
\caption{Loss landscapes with mini-batch $\{(s_3, a_1, s_2, r)$, $(s_2, a_2, s_4, r)\}$. Only one critical point exist in (effective) loss landscape.}
\label{fig::loss_landscape_s3_a1_s1_s2_a2_s4}
\end{figure}

\begin{figure}[h!]
\centering
\begin{subfigure}[t]{.4\textwidth}
  \centering
  \includegraphics[width=1\linewidth]{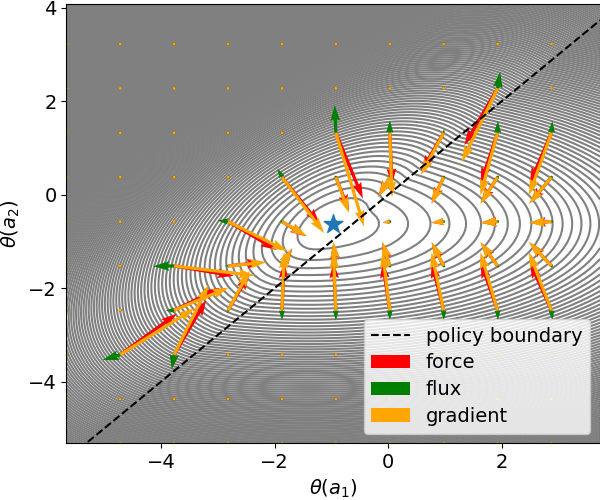}
  \caption{loss landscape}
  \label{fig:sub-first}
\end{subfigure}
\hspace{10mm}
\begin{subfigure}[t]{.4\textwidth}
  \centering
  \includegraphics[width=1\linewidth]{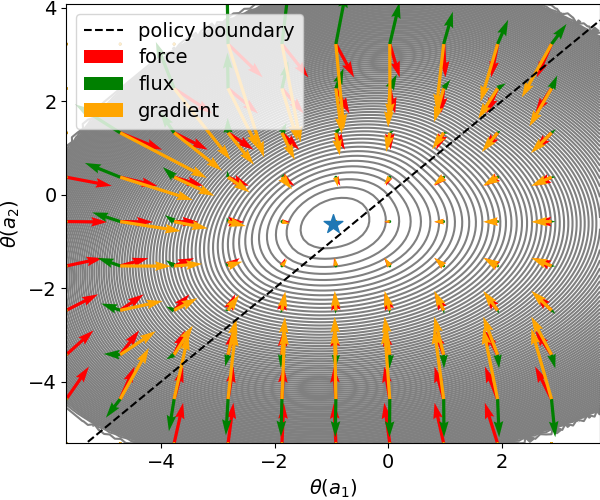}
  \caption{effective loss landscape}
  \label{fig:sub-first}
\end{subfigure}
\caption{Loss landscapes with mini-batch $\{(s_3, a_1, s_1, r)$,  $(s_3, a_2, s_4, r)\}$. Only one critical point exist in (effective) loss landscape.}
\label{fig::loss_landscape_s3_a1_s1_s3_a2_s4}
\end{figure}

\begin{figure}[h!]
\centering
\includegraphics[width=0.4\linewidth]{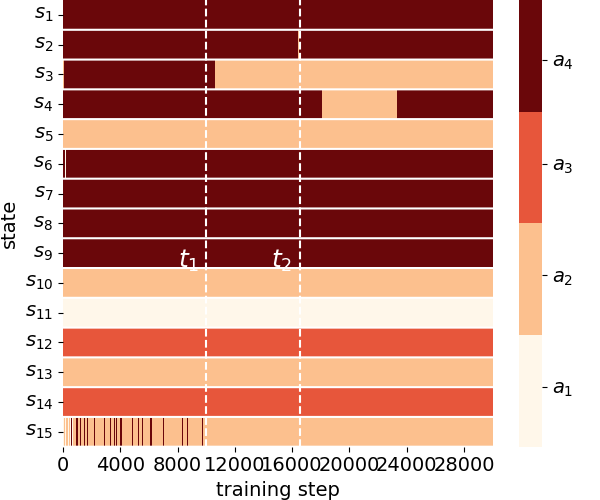}
\caption{The policy changes during training.}
\label{fig::policy_dynamics_full}
\end{figure}

\end{appendix}

\end{document}